\def\eqref#1{equation~\ref{#1}}
\def\1{\bm{1}}
\def\rvf{{\mathbf{f}}}
\def\rvy{{\mathbf{y}}}
\def\vmu{{\bm{\mu}}}
\DeclareMathAlphabet{\mathsfit}{\encodingdefault}{\sfdefault}{m}{sl}
\SetMathAlphabet{\mathsfit}{bold}{\encodingdefault}{\sfdefault}{bx}{n}
\def\gH{{\mathcal{H}}}
\def\gL{{\mathcal{L}}}
\newcommand{\E}{\mathbb{E}}
\newcommand{\KL}{D_{\mathrm{KL}}}
\newtheorem{theorem}{Theorem}
\newtheorem{proposition}{Proposition}
\newtheorem{definition}{Definition}
\newcommand{\sbra}{\left(}
\newcommand{\sket}{\right)}
\newcommand{\ind}[1]{\mathbbm{1}\left[#1\right]}
\newcommand{\hL}{\widehat{\gL}}
\newcommand{\hLg}{\hL^{\gamma}}
\newcommand{\Lg}{\gL^{\gamma}}
\newcommand{\Lgh}{\gL^{\gamma/2}}
\newcommand{\risk}{\gL^0}
\theoremstyle{plain}
\theoremstyle{definition}
\definecolor{firstcolor}{HTML}{009E73}
\definecolor{secondcolor}{HTML}{0072B2}
\definecolor{thirdcolor}{HTML}{D55E00}
\definecolor{mydarkblue}{rgb}{0,0.08,0.45}
\definecolor{myblue}{HTML}{3b75c3}
\definecolor{myred}{HTML}{E33222}
\definecolor{mygreen}{HTML}{438773}
\definecolor{mymaroon}{RGB}{142,27,19}
\definecolor{maroon}{HTML}{800000}
\definecolor{mycite}{cmyk}{0.55,1,0,0.15}
\definecolor{codeblue}{rgb}{0.25,0.5,0.5}
\definecolor{codekw}{rgb}{0.85, 0.18, 0.50}
\definecolor{codegreen}{rgb}{0,0.6,0}
\definecolor{codegray}{rgb}{0.5,0.5,0.5}
\definecolor{codepurple}{rgb}{0.58,0,0.82}
\definecolor{backcolour}{rgb}{0.95,0.95,0.92}
\title{On the Benefits of Attribute-Driven Graph Domain adaptation}
\author{Ruiyi Fang$^{1,*}$\quad  Bingheng Li$^{2,}$\thanks{Equal contribution}\quad Zhao Kang$^{4}$\quad  Qiuhao Zeng$^{1}$\quad  Nima Hosseini Dashtbayaz$^{1}$\\
\textbf{Ruizhi Pu}$^{1}$\thanks{Corresponding author} \quad
\textbf{Boyu Wang}$^{1,3}$\quad 
\textbf{Charles Ling}$^{1,3}$\quad \\
$^{1}$  Western University \quad\quad 
$^{2}$ Michigan State University\quad\quad 
$^{3}$ Vector Institute\\
$^{4}$ University of Electronic Science and Technology of China\\
\texttt{\{rfang32,rpu2,qzeng53,charles.ling\}@uwo.ca} \\
\texttt{libinghe@msu.edu} \quad \texttt{zkang@uestc.edu.cn}\quad \texttt{ bwang@csd.uwo.ca}\\
}
\begin{document}

\maketitle

\begin{abstract}

Graph Domain Adaptation (GDA) addresses a pressing challenge in cross-network learning, particularly pertinent due to the absence of labeled data in real-world graph datasets. Recent studies attempted to learn domain invariant representations by eliminating structural shifts between graphs. In this work, we show that existing methodologies have overlooked the significance of the graph node attribute, a pivotal factor for graph domain alignment. 
Specifically, we first reveal the impact of node attributes for GDA by theoretically proving that in addition to the graph structural divergence between the domains, the node attribute discrepancy also plays a critical role in GDA. Moreover, we also empirically show that the attribute shift is more substantial than the topology shift, which further underscore the importance of node attribute alignment in GDA. Inspired by this finding, a novel cross-channel module is developed to fuse and align both views between the source and target graphs for GDA. Experimental results on a variety of benchmark verify the effectiveness of our method.
\end{abstract}

\section{Introduction}

In the area of widespread internet data collection, graph vertices are frequently associated with content information, referred to as node attributes within basic graph data. 
Such graph data can be widely used in prevalent real-world applications, with data suffering from label scarcity problems in annotating complex structured data is both expensive and difficult~\citep{xu2022graph}. To solve such a challenge, transferring abundant labeling knowledge from task-related graphs is a method considered~\citep{chen2019graph}. Giving labeled graphs as a source to solve unlabeled graph targets has been proposed as graph domain adaptation (GDA) as a paradigm to effectively transfer knowledge across graphs by addressing distribution shifts~\citep{shi2024graph}.
\begin{wrapfigure}{R}{0.45\textwidth}
    \vspace{-1.5em}
    \centering
    \includegraphics[width=0.45\textwidth]{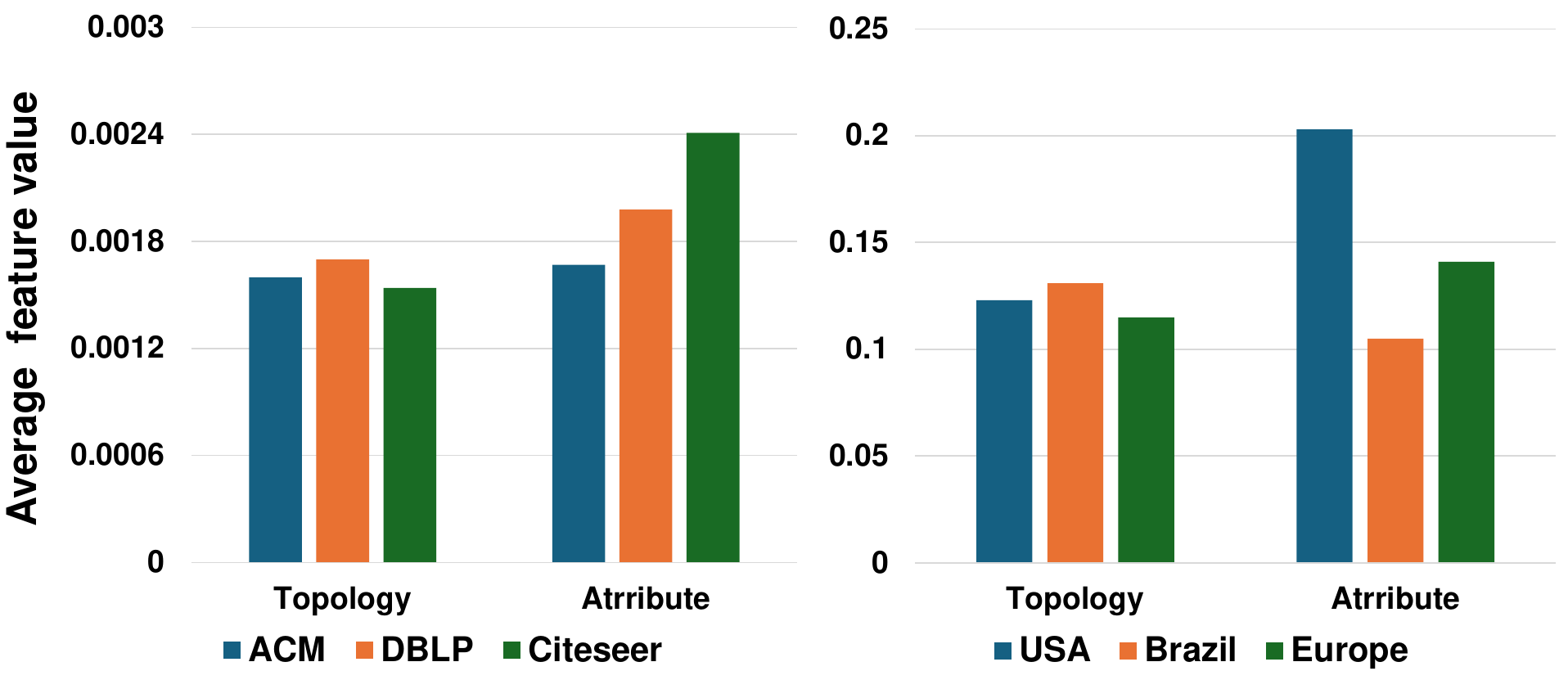}
    \vskip -1em
    \caption{\footnotesize This represents feature value in two groups of datasets. This shows the feature value distribution gap in the {attribute} is larger than in the {topology}.\label{figdis1}}
    \vskip -1em
\end{wrapfigure}

Early works on GDA apply deep domain adaptation (DA) techniques directly, thereby{ ~\citep{shen2020network, shui2023towards, wu2020unsupervised, shen2020adversarial, dai2022graph}} without considering the topological structures of graphs for domain alignment. To address this issue, several recent works have been proposed to leverage the inherent properties of graph topology (e.g., adjacency matrix). While these methods~\citep{yan2020graphae,shi2023improving,shen2023domain,wu2023non} have achieved substantial improvements by alleviating the topological discrepancy between domains, they overlook the importance of node attributes, a fundamental aspect of GDA. To verify our argument, we investigate the projected feature values\footnote{Details on the construction of project features are presented in Section\ref{Definition of average feature value} of Appendix.} of graph topology and attribute on two GDA benchmarks, as shown in Figure\ref{figdis1}. It can be observed that feature value discrepancy exists in all GDA benchmark inside datasets, with feature value discrepancy for attributes significantly larger than topology feature value discrepancy.
Based on this observation, we can conclude that (1) graph distribution shift exists in both attribute and topology; (2) attribute divergence between the source and target graphs is more significant than the topology divergence.

\begin{figure}[!htbp]
    \centering
    \subfigure[Framework Overview]{
    \includegraphics[width=85mm]{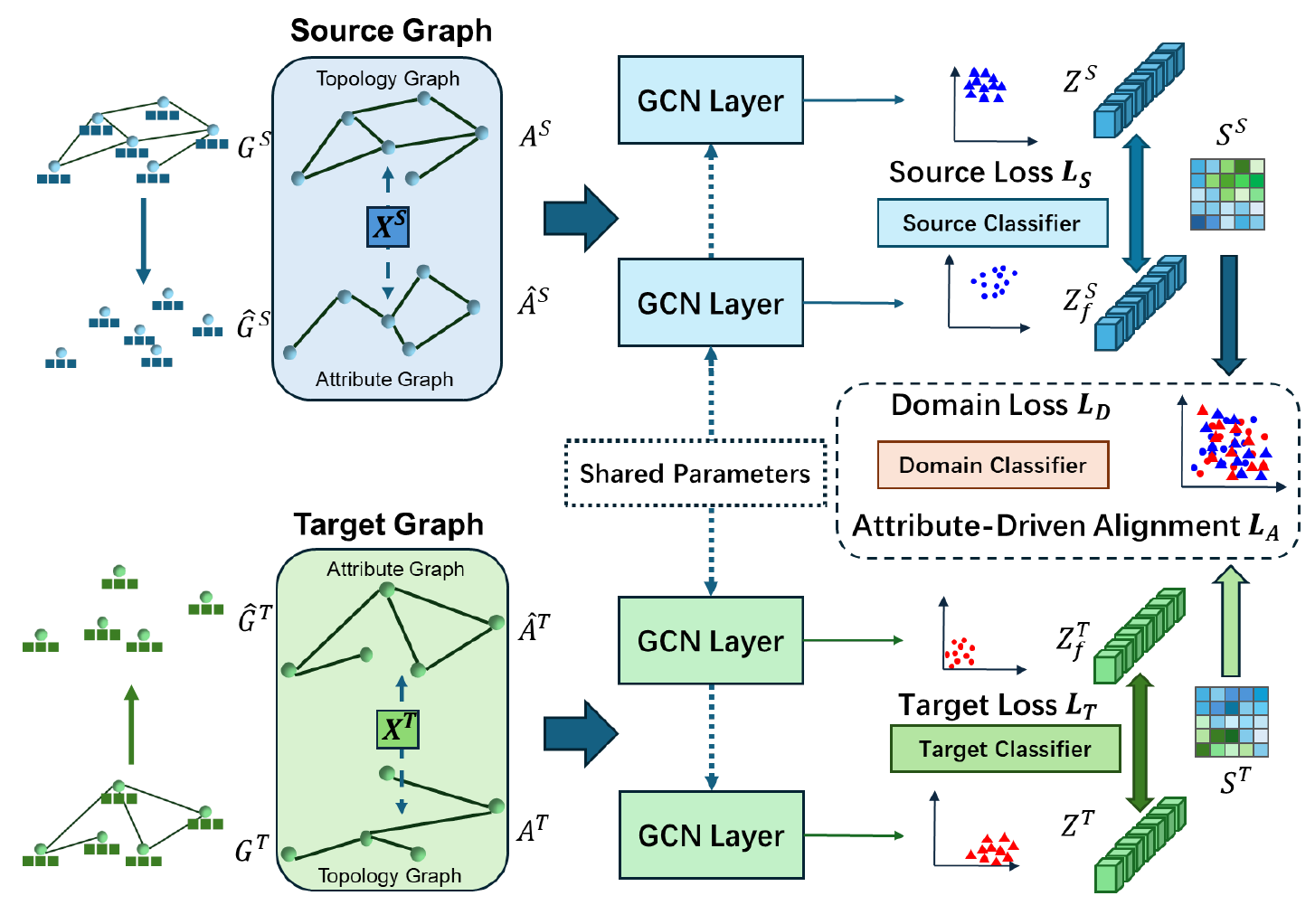}
    }
    \hspace{+5mm}
    \subfigure[Attribute-Driven Example]{
    \includegraphics[width=40mm]{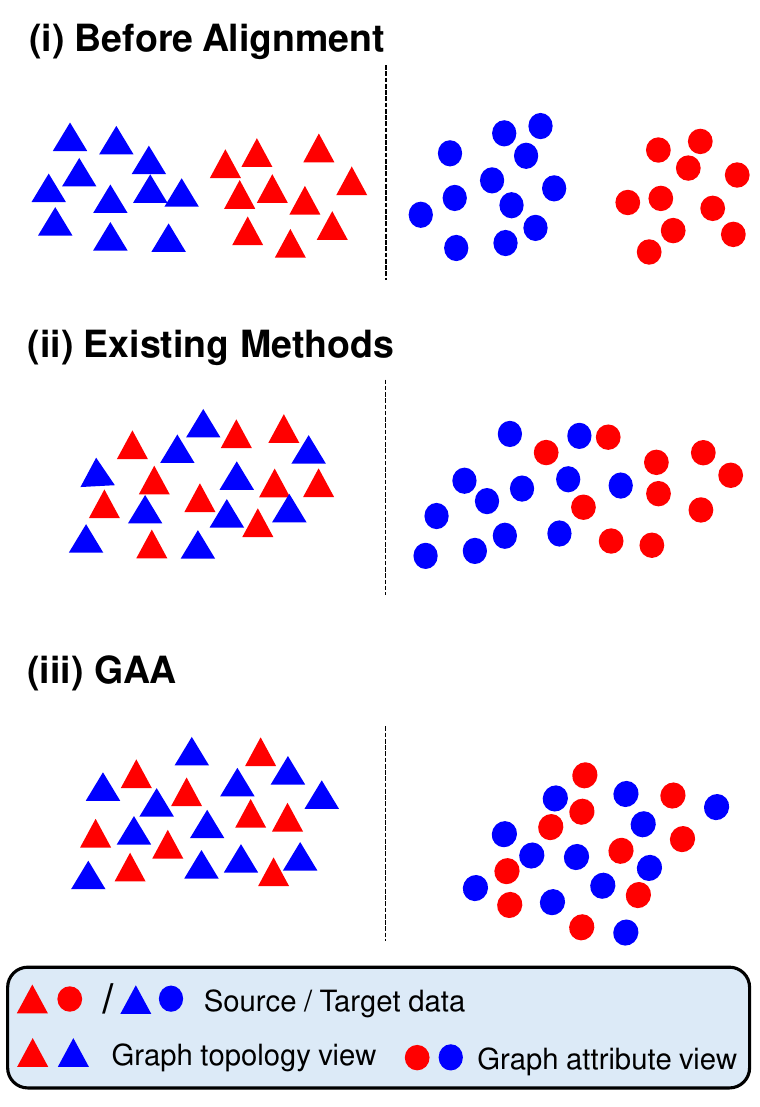}
    }
    \vspace{-2.mm}
     \caption{(a) An overview of our method. GAA gives attribute and topology graph representation, where minimizing source and target distribution shift through two views. (b)(i) Distribution shifts exist in both topology and attribute views before alignment. (ii) Existing GDA algorithms can only address graph topology shifts but not attribute shifts. (iii) GAA can address GDA attribute shifts.}
  \label{fig:overview}
\end{figure}

Motivated by this observation, we theoretically investigate the domain discrepancy between two graphs, revealing the role of node attribute for GDA. Specifically, by leveraging the PAC-Bayes framework, we derive a generalization bound of GDA, which unveils how graph structure and node attributes jointly affect the expected risk of the target graph. Moreover, we also show that the discrepancy between the source and target graphs can be upper bounded in terms of both node attributes and topological structure. In other words, our theoretical analysis reveals that both attribute and topology views should be considered for GDA, with the former having a more significant impact on domain alignment, as shown in Figure .\ref{figdis1}.

Our theoretical insights highlight the significance of characterizing the cross-network domain shifts in both node attributes and topology. To this end, we propose a novel cross-channel graph attribute-driven alignment (GAA) algorithm for cross-network node classification, as shown in Figure.\ref{fig:overview} (a). Unlike existing methods that rely solely on topology, GAA also constructs an attribute graph (feature graph) to mitigate domain discrepancies. Furthermore, GAA also introduces a cross-view similarity matrix, which acts as a filter to enhance and integrate feature information within each domain, facilitating synergistic refinement of both attribute and topology views for GDA. Figure\ref{fig:overview} (b) illustrates the benefits of GAA for GDA, which alleviates both attribute and topology shifts.

Our main contributions are summarized as follows:
\begin{itemize}
    \item We reveal the importance of node attributes in GDA from both empirical and theoretical aspects.
    \item Motivated by our theoretical analysis, we proposed GAA, a novel GDA algorithm that minimizes both attribute and topology distribution shifts based on intrinsic graph property.
    \item Comprehensive experiments on benchmarks show the superior performance of our method compared to other state-of-the-art methods for real-world datasets of the cross-network node classification tasks.
\end{itemize}

\section{Related Work}

Unsupervised domain adaptation is a wildly used setting of transfer learning methods that aims to minimize the discrepancy between the source and target domains. To solve cross-domain classification tasks, these methods are based on deep feature representation~\citep{zhu2022weakly,zengtowards}, which maps different domains into a common feature space. Some recent studies have overcome the imbalance of domains and the label distribution shift of classes to transfer model well~\citep{jing2021towards,xu2023class,pu2024leveraging,zeng2024latent}. Some novel settings in domain adaption have also gotten a lot of attention, like source free domain adaption(SFDA)~\citep{yang2021generalized, xu2025unraveling}, test time domain adaption(TTDA)~\citep{wang2022continual}.
As for graph-structured data, several studies have been proposed for cross-graph knowledge transfer via GDA setting methods~\citep{shen2019network,dai2022graph,shi2024graph}. ACDNE~\citep{shen2020adversarial} adopt k-hop PPMI matrix to capture high-order proximity as global consistency for source information on graphs. CDNE~\citep{shen2020network} learning cross-network embedding from source and target data to minimize the maximum mean discrepancy (MMD) directly. GraphAE~\citep{yan2020graphae} analyzes node degree distribution shift in domain discrepancy and solves it by aligning message-passing routers. DM-GNN~\citep{shen2023domain} proposes a method to propagate node label information by combining its own and neighbors’ edge structure. 
UDAGCN~\citep{wu2020unsupervised} develops a dual graph convolutional network by jointly capturing knowledge from local and global levels to adapt it by adversarial training. ASN~\citep{zhang2021adversarial} separates domain-specific and domain-invariant variables by designing a private en-coder and uses the domain-specific features in the network to extract the domain-invariant shared features across networks.  SOGA~\citep{mao2021source} first time uses discriminability by encouraging the structural consistencies between target nodes in the same class for the SFDA in the graph. GraphAE~\citep{guo2022learning} focuses on how shifts in node degree distribution affect node embeddings by minimizing the discrepancy between router embedding to eliminate structural shifts.
SpecReg~\citep{you2022graph} used the optimal transport-based GDA bound for graph data and discovered that revising the GNNs’ Lipschitz constant can be achieved by spectral smoothness and maximum frequency response.  JHGDA~\citep{shi2023improving} studies the shifts in hierarchical graph structures, which are inherent properties of graphs by aggregating domain discrepancy from all hierarchy levels to derive a comprehensive discrepancy measurement. ALEX~\citep{yuan2023alex} first creates a label shift enhanced augmented graph view using a low-rank adjacency matrix obtained through singular value decomposition by driving contrasting loss. SGDA~\citep{qiao2023semi} enhances original source graphs by integrating trainable perturbations (adaptive shift parameters) into embeddings by conducting adversarial learning to simultaneously train both the graph encoder and perturbations, to minimize marginal shifts.

\section{Theoretical Analysis}
In this subsection, we provide a discussion on the PAC-Bayesian analysis with the graph domain adaptation.

\textbf{Notations.} 
An undirected graph $ G = \left\{\mathcal{V}, \mathcal{E}, A, X, Y\right\} $ consists of a set of nodes $ \mathcal{V} $ and edges $ \mathcal{E} $, along with an adjacency matrix $ A $, a feature matrix $ X $, and a label matrix $ Y $. The adjacency matrix $ A \in \mathbb{R}^{N \times N} $ encodes the connections between $ N $ nodes, where $ A_{ij} = 1 $ indicates an edge between nodes $ i $ and $ j $, and $ A_{ij} = 0 $ means the nodes are not connected. The feature matrix $ X \in \mathbb{R}^{N \times d} $ represents the node features, with each node described by a $ d $-dimensional feature vector. Finally, $ Y \in \mathbb{R}^{N \times C} $ contains the labels for the $ N $ nodes, where each node is classified into one of $ C $ classes.

In this work, we explore the task of node classification in a unsupervised setting, where both the node feature matrix $X$ and the graph structure $A$ are given before learning. We assume that all key aspects of our analysis are conditioned on the fixed graph structure $A$ and feature matrix $X$, while the uncertainty arises from the node labels $Y$. 
Specifically, we assume that the label $ y_i $ for each node $ i \in \mathcal{V} $ is drawn from a latent conditional distribution $ \Pr(y_i \mid Z_i) $, where $ Z = f(X, G) $, with $ f $ being an aggregation function that combines features from the local neighborhood of each node within the graph. Additionally, we assume that the labels for different nodes are independent of each other, given their respective aggregated feature representations $ Z_i $.
With a partially labeled node set $ V_0 \subseteq \mathcal{V} $, our objective in the node classification problem is to learn a model $ h : \mathbb{R}^{N \times d} \times G_N \to \mathbb{R}^{N \times C} $ from a family of classifiers $ \mathcal{H} $ that can predict the labels for the remaining unlabeled nodes. For a given classifier $ h $, the predicted label $ \hat{Y}_i $ for node $ i $ is determined by:$\:
\hat{Y}_i = \arg\max_{k \in \{1, \ldots, C\}} h_i(X, G)[k],
$
where $ h_i(X, G) $ is the output corresponding to node $ i $ and $ h_i(X, G)[k] $ represents the score for the $ k $-th class for node $ i $.



\textbf{Margin loss on each domain.} Now we can define the empirical and expected margin loss of a classifier $h\in \gH$ on source graph $G^S=\left\{\mathcal{V}^S, \mathcal{E}^S, A^S, X^S, Y^S\right\}$ and target graph $G^T=\left\{\mathcal{V}^T, \mathcal{E}^T, A^T, X^T\right\}$. Given $Y^S$, the empirical margin loss of $h$ on $G^S$ for a margin $\gamma \ge 0$ is defined as
\begin{equation}
	\label{eq:emp-margin-loss}
	\hLg_S(h) := \frac{1}{N_S} \sum_{i\in \mathcal{V}^S}\ind{h_i(X^S, G^S)[Y_i] \le \gamma + \max_{c\neq Y_i}h_i(X^S, G^S)[c]}
\end{equation}
where $\ind{\cdot}$ is the indicator function, $c$ represents node labeling . The expected margin loss is then defined as
\begin{equation}
	\label{eq:margin-loss}
	\Lg_S(h) := \E_{Y_i\sim \Pr(Y\mid Z_i), i\in \mathcal{V}^S} \hLg_S(h)
\end{equation}

\vspace{0.5em}
\begin{definition} [Expected Loss Discrepancy]
Given a distribution $P$ over  a function family $\gH$, for any $\lambda > 0$ and $\gamma \ge 0$, for any $G^S$ and $G^T$, define the expected loss discrepancy between $\mathcal{V}^S$ and $\mathcal{V}^T$  as
 $
D^{\gamma}_{S, T}(P;\lambda) := \ln \E_{h\sim P} e^{\lambda \sbra \Lgh_T(h) - \Lg_{S}(h)\sket},
$
where $\Lgh_T(h)$ and $\Lg_{S}(h)$ follow the definition of Eq.~(\ref{eq:margin-loss}).
\end{definition}

 Intuitively, $D^{\gamma}_{S, T}(P;\lambda)$ captures the difference of the expected loss between $\mathcal{V}^S$ and $\mathcal{V}^T$ in an average sense (over $P$).
\vspace{0.5em}
\begin{theorem}[Domain Adaptation Bound for Deterministic Classifiers] 
	\label{thm:PAC-Bayes-deterministic}
    Let $ \mathcal{H} $ be a family of classification functions. For any classifier $ h $ in $ \mathcal{H} $, and for any parameters $ \lambda > 0 $ and $ \gamma \geq 0 $, consider any prior distribution $ P $ over $ \mathcal{H} $ that is independent of the training data $ \mathcal{V}^S $. With a probability of at least $ 1 - \delta $ over the sample $ Y^S $, for any distribution $ Q $ on $ \mathcal{H} $ such that 
$\Pr_{\tilde{h} \sim Q} \left[ \max_{i \in \mathcal{V}^S \cup \mathcal{V}^T} \| h_i(X, G) - \tilde{h}_i(X, G) \|_{\infty} < \frac{\gamma}{8} \right] > \frac{1}{2}$, the following inequality holds:

\begin{equation}
	\begin{aligned}
		\label{eq:general-deterministic}
		\risk_T(\tilde{h}) \leq \hLg_S(\tilde{h}) + \frac{1}{\lambda} \left[ 2(\KL(Q \| P) + 1) + \ln \frac{1}{\delta} + \frac{\lambda^2}{4N_S} + D^{\gamma/2}_{S, T}(P;\lambda) \right].
	\end{aligned}	
\end{equation}

\end{theorem}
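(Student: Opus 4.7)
The plan is to follow the Neyshabur--Bhojanapalli--McAllester--Srebro PAC--Bayesian recipe, adapted to the domain--adaptation setting by introducing the discrepancy $D^{\gamma/2}_{S,T}$ as the bridge between source and target expectations. The proof decomposes into three ingredients: (i) a margin--stability/derandomization step that moves between the deterministic $\tilde{h}$ and the random $h\sim Q$, using the hypothesis $\Pr_{h\sim Q}[\|h-\tilde{h}\|_\infty<\gamma/8]>1/2$; (ii) a Donsker--Varadhan change--of--measure step with prior $P$ and posterior $Q$ that converts expected target loss into expected source loss plus the discrepancy; and (iii) a standard PAC--Bayes concentration step that turns expected into empirical source margin loss on $\mathcal{V}^S$.

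First I would establish the margin--stability claim: whenever $h\in B:=\{h:\max_{i\in\mathcal{V}^S\cup\mathcal{V}^T}\|h_i(X,G)-\tilde{h}_i(X,G)\|_\infty<\gamma/8\}$, both the correct--class score and the maximum competing--class score at each node shift by at most $\gamma/8$, so the margin shifts by at most $\gamma/4$. This yields the pointwise inequalities $\risk_T(\tilde{h})\le\Lgq_T(h)$ and $\hLgh_S(h)\le\hLg_S(\tilde{h})$ on $B$. Combined with $Q(B)>1/2$ via the standard derandomization argument, this gives
\[
\risk_T(\tilde{h})\le \E_{h\sim Q}\Lgq_T(h) + \text{const},\qquad \E_{h\sim Q}\hLgh_S(h)\le \hLg_S(\tilde{h})+\text{const},
\]
where the constants absorb the $\ln 2$ cost of conditioning on $B$. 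Next, I would apply the change--of--measure inequality to $\phi(h)=\lambda(\Lgq_T(h)-\Lgh_S(h))$, whose log--MGF under $P$ is exactly $D^{\gamma/2}_{S,T}(P;\lambda)$, giving $\lambda\,\E_{h\sim Q}[\Lgq_T-\Lgh_S]\le \KL(Q\|P)+D^{\gamma/2}_{S,T}(P;\lambda)$. Finally, since the margin indicator is $[0,1]$--valued and the labels $Y^S_i$ are independent conditional on the aggregated features $Z_i$, a sub--Gaussian moment bound combined with a second change of measure yields, with probability at least $1-\delta$ over $Y^S$,
\[
\lambda\,\E_{h\sim Q}[\Lgh_S-\hLgh_S]\le \KL(Q\|P)+\ln(1/\delta)+\lambda^2/(4N_S).
\]
Adding these two inequalities, inserting the derandomization bounds at both ends, and dividing by $\lambda$ produces the claim.

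The factor $2$ multiplying $(\KL(Q\|P)+1)$ is then transparent: $\KL(Q\|P)$ enters once in the change--of--measure step and once in the concentration step, and the ``$+1$'' absorbs the $\ln 2$ cost paid by each of the two derandomization inequalities. The main obstacle I anticipate is precisely this derandomization bookkeeping: the naive approach of replacing $Q$ by its restriction $Q|_B$ does not admit a clean inequality $\KL(Q|_B\|P)\le\KL(Q\|P)+\ln 2$ in general, because the tail contribution $\int_{B^c}dQ\,\ln(dQ/dP)$ can be arbitrarily negative. The standard workaround is a probabilistic--Markov argument that trades a factor of $2$ in the losses rather than in the KL, which must then be carefully folded into the two $+1$ constants to arrive at exactly the coefficient $2(\KL(Q\|P)+1)$ stated in the theorem. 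Everything else is a routine composition of standard PAC--Bayes tools with the graph--specific definitions of $\hLg$, $\Lgh$, $\Lgq$, and $D^{\gamma/2}_{S,T}$ already set up in the notation.
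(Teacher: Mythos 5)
Your three ingredients are indeed the right ones. Note first that the paper contains no proof of Theorem~\ref{thm:PAC-Bayes-deterministic} at all: it is imported verbatim from \citep{ma2021subgroup} (with source/target replacing the training-set/subgroup indices there), and the proof in that reference is precisely the Neyshabur-style recipe you sketch — margin stability at scale $\gamma/8$ giving the level shifts $\risk_T(\tilde h)\le \Lgq_T(h)$ and $\hLgh_S(h)\le \hLg_S(\tilde h)$ for every $h$ in the good set $B$, a Donsker--Varadhan change of measure, and Hoeffding's lemma over the conditionally independent labels $Y^S$ plus Markov's inequality, with $D^{\gamma/2}_{S,T}(P;\lambda)$ entering as the prior log-MGF. (You also silently correct the statement's typo: the deterministic classifier in the conclusion must be the reference point of the $\gamma/8$-ball, with the random draw from $Q$ perturbing it.) Your steps (i), (ii) and (iii) are each individually sound.

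The genuine gap is in the final assembly, and it is twofold. First, your reason for abandoning the restricted posterior $Q|_B$ is wrong: the tail contribution is not arbitrarily negative, since by the log-sum inequality $\int_{B^c}\ln(dQ/dP)\,dQ \ \ge\ Q(B^c)\ln\bigl(Q(B^c)/P(B^c)\bigr)\ \ge\ Q(B^c)\ln Q(B^c)\ \ge\ -1/e$ (using $P(B^c)\le 1$), whence
\begin{equation*}
\KL(Q|_B\,\|\,P)\ \le\ \tfrac{1}{Q(B)}\left(\KL(Q\|P)+\tfrac{1}{e}\right)+\ln\tfrac{1}{Q(B)}\ \le\ 2\left(\KL(Q\|P)+1\right),
\end{equation*}
and this single inequality is the entire source of the coefficient $2(\KL(Q\|P)+1)$: the $2$ is $1/Q(B)$, not a doubled KL. Second, your replacement route — two DV applications to the raw $Q$ plus ``a factor of $2$ in the losses'' — cannot reach \eqref{eq:general-deterministic}: with raw $Q$, derandomization only yields $\risk_T(\tilde h)\le \frac{1}{Q(B)}\E_{h\sim Q}\Lgq_T(h)\le 2\,\E_{h\sim Q}\Lgq_T(h)$ and $\E_{h\sim Q}\hLgh_S(h)\le \hLg_S(\tilde h)+Q(B^c)$, i.e.\ a multiplicative $2$ on the target risk and an additive constant near $\tfrac12$ that does not vanish inside the $\frac{1}{\lambda}[\,\cdot\,]$ bracket; you end with something like $\risk_T(\tilde h)\le 2\hLg_S(\tilde h)+1+\frac{2}{\lambda}[\cdots]$, strictly weaker than the theorem, whose coefficients on both risks are $1$. (And if you instead run your two DV steps under $Q|_B$, the KL term becomes $2\KL(Q|_B\|P)\le 4(\KL(Q\|P)+1)$ — also too big.) The correct bookkeeping merges your (ii) and (iii) into a single change of measure under $\tilde Q=Q|_B$ applied to $\lambda\bigl(\Lgq_T(h)-\hLgh_S(h)\bigr)$: since $\Lgq_T(h)$ and $\Lgh_S(h)$ are deterministic given the features, $\E_{Y^S}\E_{h\sim P}\,e^{\lambda(\Lgq_T(h)-\hLgh_S(h))}=\E_{h\sim P}\bigl[e^{\lambda(\Lgq_T(h)-\Lgh_S(h))}\,\E_{Y^S}e^{\lambda(\Lgh_S(h)-\hLgh_S(h))}\bigr]\le e^{D^{\gamma/2}_{S,T}(P;\lambda)+\lambda^2/(4N_S)}$ by nodewise conditional independence and Hoeffding's lemma; Markov then supplies $\ln(1/\delta)$, the pointwise margin inequalities hold under all of $\tilde Q$, and the KL-restriction bound above finishes the proof with KL appearing exactly once.
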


We follow the characterization from \citep{ma2021subgroup}.   In the generalization bound, the KL-divergence $D_{\mathrm{KL}}(Q \| P)$ is usually considered as a measurement of the model complexity.
The terms  $\ln(1/\delta)$ and $\frac{\lambda^2}{4N_S}$ are commonly seen in PAC-Bayesian analysis for IID supervised settings. The expected loss discrepancy $D^{\gamma/2}_{S, T}(P;\lambda)$ between the source nodes $\mathcal{V}^S$ and the targeted nodes $\mathcal{V}^T$ is essential to our analysis.
To derive the generalization guarantee, we need to upper-bound the expected loss discrepancy $D^{\gamma}_{S, T}(P;\lambda)$.

\vspace{0.5em}
\begin{proposition}[Bound for $D^{\gamma}_{S, T}(P;\lambda)$]
    For any $\gamma \ge 0$, and under the assumption that the prior distribution $P$ over the classification function family $\mathcal{H}$ is defined, we establish a bound for the domain discrepancy measure $ D^{\gamma/2}_{S, T}(P;\lambda) $.
    Specifically, we have the following inequality:
    \begin{equation}
        \begin{aligned}
            D^{\gamma/2}_{S, T}(P;\lambda) \le & O\left( \sum_{i \in V^S} \sum_{j \in V^T} ||(A^S X^S)_i - (A^T X^T)_j||_{2}^{2} + \sum_{i \in V^S} \sum_{j \in V^T} ||X^S_i - X^T_j||_{2}^{2} \right).
        \end{aligned}
    \end{equation}
    \label{equ: Proposition}
\end{proposition}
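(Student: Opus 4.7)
The plan is to upper-bound the cumulant $D^{\gamma/2}_{S,T}(P;\lambda)$ by reducing it step by step to pairwise feature differences between source and target nodes. The strategy exploits (i) the definition of the margin-based loss, (ii) the slack provided by using margin $\gamma/2$ on the target versus $\gamma$ on the source, and (iii) the Lipschitz dependence of every classifier in the support of $P$ on both its aggregated input $AX$ and its raw input $X$. The final two-term structure on the right-hand side will correspond cleanly to these two inputs, matching the paper's narrative that both topology and attributes govern the transfer gap.

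First I would use the elementary cumulant bound $\ln \mathbb{E}_{h\sim P} e^{\lambda Y} \le \lambda \cdot \mathrm{ess\,sup}_{P} Y + O(\lambda^2)$ (or Hoeffding's lemma applied to the centered quantity, since both margin losses lie in $[0,1]$) to reduce the problem to a deterministic uniform bound on $\mathcal{L}^{\gamma/2}_T(h) - \mathcal{L}^{\gamma}_S(h)$ over $h$ in the support of $P$. Next I would expand both losses as averages of per-node indicators and compare them pairwise: if, for a source node $i$ and a target node $j$, the classifier outputs $h_i(X^S,G^S)$ and $h_j(X^T,G^T)$ differ by at most $\gamma/4$ in $\ell_\infty$ norm, then the margin-$\gamma/2$ indicator at $j$ is dominated by the margin-$\gamma$ indicator at $i$. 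Charging any per-node excess loss to a pairwise output discrepancy and summing over all $(i,j) \in \mathcal{V}^S \times \mathcal{V}^T$ produces the double-sum structure that appears on the right-hand side of the proposition.

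Second I would pass from classifier-output differences to the claimed feature-space quantities. For any classifier in the GNN family whose first-layer aggregation is $Z = f(X, G) = AX$ followed by Lipschitz non-linearities, one has $\|h_i(X^S,G^S) - h_j(X^T,G^T)\|_2 \lesssim \|(A^SX^S)_i - (A^TX^T)_j\|_2 + \|X^S_i - X^T_j\|_2$, where the two terms isolate the topology-mediated and the attribute-only contributions. Squaring via $(a+b)^2 \le 2a^2 + 2b^2$ and aggregating the pairwise contributions produces exactly $O\bigl(\sum_{i,j} \|(A^SX^S)_i - (A^TX^T)_j\|_2^2 + \sum_{i,j} \|X^S_i - X^T_j\|_2^2\bigr)$ as asserted.

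The main obstacle is the first reduction: the indicator inside the margin loss is not Lipschitz, so translating loss gaps into output gaps is not automatic. The $\gamma/2$-versus-$\gamma$ slack must be used carefully --- the extra $\gamma/2$ absorbs any output perturbation up to $\gamma/4$, so the indicator can only flip when the perturbation exceeds this threshold, at which point the contribution can be charged to a squared output discrepancy via Markov's inequality. A secondary subtlety is justifying the joint GNN-Lipschitz step in both $AX$ and $X$ simultaneously; this will require either an explicit architectural assumption on $h$ (e.g., bounded weight norms and Lipschitz activations) or an averaging argument over the prior, and this is presumably what the $O(\cdot)$ in the stated proposition is hiding.
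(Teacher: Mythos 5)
Your opening reduction of the cumulant $D^{\gamma/2}_{S,T}(P;\lambda)$ to a uniform bound on $\mathcal{L}^{\gamma/2}_T(h)-\mathcal{L}^{\gamma}_S(h)$ matches what the paper implicitly does, and your margin-slack domination of indicators (outputs within $\gamma/4$ in $\ell_\infty$ imply the margin-$\gamma/2$ indicator at $j$ is dominated by the margin-$\gamma$ indicator at $i$) is correct as far as it goes. The genuine gap is in the pairwise comparison itself: the two margin losses are expectations over \emph{different} conditional label distributions. Writing $\eta_k(i)=\Pr(y_i=k\mid Z_i)$, the per-pair difference decomposes as $\sum_k \eta_k(j)\bigl(\mathcal{L}^{\gamma/2}(h_j,k)-\mathcal{L}^{\gamma}(h_i,k)\bigr)+\sum_k\bigl(\eta_k(j)-\eta_k(i)\bigr)\mathcal{L}^{\gamma}(h_i,k)$. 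Your output-closeness-plus-Markov argument controls only the first term. The second term --- the shift in label posteriors across domains --- is untouched by any Lipschitz property of $h$: two nodes with identical classifier outputs but different posteriors $\eta_k$ have different expected losses, so charging all excess loss to output discrepancies is invalid. In the paper's proof this second term is essentially the \emph{entire} content: under the Gaussian class-conditional data assumption (Definition 1 of the appendix), Bayes' rule yields $\|\eta_k(j)-\eta_k(i)\|\le O\bigl(\|f_i-f_j\|+\|f_j-\mu^{(S)}_1\|+\|f_j-\mu^{(T)}_1\|\bigr)$ for aggregated features $f$, and the paper then identifies $f$ with $AX$ and converts the mean-distance terms into attribute differences $\|X^S_i-X^T_j\|_2^2$, while the indicator-difference term (your term) is simply assumed nonpositive. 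So your route cannot close without either (a) an added assumption that the conditional label law given aggregated features is domain-invariant and Lipschitz in those features, or (b) a posterior-shift lemma of the paper's type.

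Two secondary mismatches are worth flagging. First, your attribute term $\|X^S_i-X^T_j\|_2^2$ arises from positing that $h_i$ depends directly on the raw input $X_i$ (bounded-weight layers with skip-like connections), an architectural assumption appearing nowhere in the paper; in the paper that term instead emerges from the distances of aggregated features to the class means under the generative model. Second, the proposition as proved in the paper holds only under that generative assumption (Definition 1), which your proposal never invokes --- so even where your argument is repairable, it is proving a statement under different, and stronger, hypotheses on the classifier family rather than on the data.
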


From Proposition \ref{equ: Proposition},  the topological divergence $||(A^SX^S)_i - (A^TX^T)_j||_{2}^{2}$ and the attribute divergence $||X^S_i -X^T_j||_{2}^{2}$ constitute the upper bound of $D^{\gamma}_{S, T}(P;\lambda)$ and further bound graph domain discrepancy. It also reveals both attribute and topology divergence of intrinsic graph property influence on GDA generalization upper-bound. We introduce attribute and topology alignment loss by minimizing attribute divergence by utilizing the attribute graph and minimizing topology divergence by utilizing the original graph. Graph $G^S$ and $G^T$ with topology information $A^S$ and $A^T$ through the feature extraction module can represent $(A^S X^S)_i$ and $(A^S X^S)_j$.

\section{The proposed methodology}
In this section, we propose a novel GDA method with attribute-driven alignment (GAA), which first minimizes graph attribute divergence.
The overall framework of GAA is shown in Figure\ref{fig:overview}. The main
components of the proposed method include the specific attribute convolution module and the attribute-driven alignment module. We will detail the proposed GAA in the following subsections.

\subsection{Specific Attribute Convolution Module}
Inspired by Proposition 1, we design an attribute-driven GDA model by using topology graph and feature graph. Our model mainly contains attribute-driven alignment that directly minimize discrimination in attribute and topology between source and target graph. 

\textbf{Feature Graph}
Merely using node attribute information through $X$ is unstable ~\citep{fang2022structure,mao2023revisiting,li2024pc,xie2024provable}.
A natural idea would be to utilize graph node attribute by fully making use of the information through feature space propagation~\citep {wang2020gcn,xie2024one,kang2024cdc,li2025simplified}.
Therefore we introduce feature graph into our work.

To represent the structure of nodes in the feature space, we build a 
$k$NN graph \textbf{$\hat{G}$} based on the feature matrix $X$. To be precise, 
a node similarity matrix \textbf{$SM$} is computed using the cosine similarity formula:
\begin{equation}
    SM_{ij} = \frac{X_{i}\cdot X_{j}}{|X_{i}| \cdot |X_{j}|}
\label{equ: cos}
\end{equation}
where $SM_{ij}$ is the similarity between node feature $X_i$ and node feature $X_j$.
We derivate feature graph $\hat{G}=\left\{ \mathcal{V}, \mathcal{\hat{E}}, \hat{A}, X, Y\right\}$,
which shares the same $X$ with $G$, but has a different adjacency matrix.
Therefore, topology graph and feature graph refer to 
$G$ and $\hat{G}$ respectively.
Then for each node we choose the top k nearest neighbors and
establish edges. In this way, we construct a feature graph in attribute view for the source graph $\hat{G}^S=\left\{\mathcal{V}^S, \mathcal{\hat{E}}^S, \hat{A}^S, X^S, Y^S\right\}$ and target graph $\hat{G}^T=\left\{\mathcal{V}^T, \mathcal{\hat{E}}^T, \hat{A}^T, X^T\right\}$.

\textbf{Feature Extraction Module}
To extract meaningful features from graphs, we adopt GCN that is 
comprised of multiple graph convolutional layers. With the input graph $G$, the
$(l+1)$-th layer's output $H^{(l+1)}$ can be represented as:
\begin{equation}
    H^{(l+1)} = ReLU(D^{-\frac{1}{2}}AD^{-\frac{1}{2}}H^{(l)}W^{(l)})
\end{equation}
\begin{spacing}{1.1} 
where $ReLU$ is the Relu activation function ($ReLU(\cdot) = max(0, \cdot) $), 
$D$ is the degree matrix of $A$, $W^{(l)}$ is a layer-specific trainable weight matrix, 
$H^{(l)}$ is the activation matrix in the $l$-th layer and $H^{(0)} = X$.
In our study we use two GCNs to exploit the information in topology and feature space. 
For source graph, output node embedding is donated by $Z^S$ generated from $G^S$ and $Z^S_f$ generated from $\hat{G}^S$. Similarly, for the target graph, the output node embedding is donated by $Z^T$ generated from $G^T$ and $Z^T_f$ generated from $\hat{G}^T$. 
\end{spacing}

\subsection{Source Classifier Loss}
\ 
\newline
The source classifier loss $\mathcal{L}_S\left(f_S\left(Z^S\right), Y^S\right)$ is to minimize the cross-entropy for the labeled data node in the source domain:
\begin{equation}
\centering
\begin{aligned}
\mathcal{L}_S\left(f_S\left(Z^S\right), Y^S\right)=-\frac{1}{N_S} \sum_{i=1}^{N_S} y^S_i \log \left(\hat{y}^S_i\right)
\label{equ: cl_loss}
\end{aligned}
\end{equation} 
where $y^S_i$ denotes the label of the $i$-th node in the source domain and $\hat{y}^S_i$ are the classification prediction for the $i$-th source graph labeled node $v^S_i \in \mathcal{V}^S$.

\subsection{Attribute-Driven Alignment}
\ 
\newline
To make the attribute view fully learnable, we design the attention attribute module to dynamically utilize the important attribute. Specifically, we design learnable domain adaptive models for alignment embeddings in topology and attribute views.

\textbf{Attention-based Attribute} To guide the network to take more attention to the important node attributes and make attributes learnable, we design attention-based embedding models. Specifically, we map the node attributes into three different latent spaces. By given an example in source graph attribute embedding: $Q={W}_q {Z^S_f}^{\top}$, $K={W}_k {Z^S_f}^{\top}$, $M={W}_v {Z^S_f}^{\top}$, where ${W}_q \in \mathbb{R}^{d \times d}, {W}_k \in \mathbb{R}^{d \times d}, {W}_v \in \mathbb{R}^{d \times d}$ are the learnable parameter matrices. And ${Q} \in \mathbb{R}^{d \times N}, {K} \in \mathbb{R}^{d \times N}$ and ${M} \in \mathbb{R}^{d \times N}$ denotes the query matrix, key matrix and value matrix, respectively.

The attention-based attribute matrix ${att^S_f}$ can be calculated by:
\begin{equation}
\centering
\begin{aligned}
{att^S_f}={softmax}\left(\frac{{K}^{\top} {Q}}{\sqrt{d}}\right) {M}^{\top}
\label{equ: att}
\end{aligned}
\end{equation} 
Likewise, we can obtain a similar objective of each learnable graph embedding ${att^S}$, ${att^T_f}$ and ${att^T}$.

\textbf{Cross-view Similarity Matrix Refinement} 
\begin{spacing}{1.1} 
Subsequently, the cross-view similarity matrix $S^S$ represents the similarity between the source attribute and topology graph. $S^T$ represents the similarity between the target attribute and topology graph. $S^S$ as formulated:
\end{spacing}
\begin{equation}
\centering
\begin{aligned}
{S^S}=\frac{{Z^S_f} \cdot\left({Z^S}\right)^{\top}}{||{Z^S_f}||_2 \cdot||{Z^S}||_2}
\label{equ: S_S}
\end{aligned}
\end{equation} 
Likewise, we can obtain a similarity matrix of the target graph by:
\begin{equation}
\begin{aligned}
S^T = \frac{Z^T_f \cdot \left(Z^T\right)^{\top}}{\|Z^T_f\|_2 \cdot \|Z^T\|_2}
\end{aligned}
\label{equ:S_T}
\end{equation}

where ${S}^S$ and ${S}^T$ is the cross-view similarity matrix, and $\langle\cdot\rangle$ is the function to calculate similarity. Here, we adopt cosine similarity~\cite{qian2024upper}. The proposed similarity matrix ${S}^S$ and ${S}^T$ measures the similarity between samples by comprehensively considering attribute and structure information. The connected relationships between different nodes could be reflected by ${S}^S$ and ${S}^T$. Therefore, we utilize ${S}^T$ and ${S}^S$ to refine the structure in augmented view with Hadamard product, $att^S_f$ can be formulated as:
\begin{equation}
\centering
\begin{aligned}
{att^S} = {att^S} \odot  {S^S}
\label{equ: X}
\end{aligned}
\end{equation}
\begin{spacing}{1.1} 
Similarly we can get ${att^S_f} $ by $ {att^S_f} \odot  {S^S}$, ${att^T_f}$ by $ {att^T_f} \odot  {S^T}$, ${att^T}$ by $ {att^T} \odot  {S^T}$, which respectively represent source graph and target graph in both topology view and attribute view embedding.
\end{spacing}
\textbf{Attribute-Driven Domain Adaptive} 
\begin{spacing}{1.1} 
The proposed framework follows the transfer learning paradigm, where the model minimizes the divergence of the two views. In detail, GAA jointly optimizes two views of GDA alignment. To be specific, $\mathcal{L_A}$ is the Mean Squared Error (MSE) loss between the source graph $att^S$ and $att^S_f$ and the target graph $att^T$ and $att^T_f$, which can be formulated as:
\end{spacing}
\vspace{0.5em}
\begin{equation}
\centering
\begin{aligned}
\mathcal{L}_A=-\left(||{att^S}-{att^T}||_2^2+||{att^S_f}-{att^T_f}||_2^2\right)
\label{equ: LEMMA}
\end{aligned}
\end{equation}

\begin{spacing}{1.1} 
We adapt the domain in two views, domain classifier loss in the topology view is $||{att^S_f}-{att^T_f}||_2^2$
enforces that the attribute graph node representation after the node feature extraction and similarity matrix refinement from source and target graph $G^S_f$ and $G^T_f$. Similarly, we get $||{att^S}-{att^T}||_2^2$ from $G^S$ and $G^T$. And $||{att^S}-{att^T}||_2^2$ corresponds to the first item of Proposition \ref{equ: Proposition}, which is $ ||(A^SX^S)_i - (A^TX^T)_j||_2^2$ means minimizing structural distribution shift. In attribute view is $||{att^S_f}-{att^T_f}||_2^2$ trying to discriminate corresponds to the second term $ ||X^S_i -X^T_j||_2^2$  of Proposition \ref{equ: Proposition}, which means minimizing attribute distribution shift.
\end{spacing}

\subsection{Target Node Classification}
\ 
\newline
We use Gradient Reversal Layer (GRL) ~\citep{ganin2016domain} for adversarial training. Mathematically, we define the GRL as $Q_\lambda(x) = x$ with a reversal gradient\textbf{$\frac{\partial Q_\lambda(x)}{\partial x}=-\lambda I$}. Learning a GRL is adversarial in such a way that: on the one side, the reversal gradient enforces $f_S (Z^S)$ to be maximized; on the other side, $\theta_D$ is optimized by minimizing the cross-entropy domain classifier loss:
\begin{equation}
\centering
\begin{aligned}
\mathcal{L}_{D}=-\frac{1}{N_S+N_T} \sum_{i=1}^{N_S+N_T} m_i \log \left(\hat{m}_i\right)+\left(1-m_i\right) \log \left(1-\hat{m}_i\right)
\label{equ: DA_loss}
\end{aligned}
\end{equation} 
where $m_i \in\{0,1\}$ denotes the groundtruth, and $\hat{m}_i$ denotes the domain prediction for the $i$-th node in the source domain and target domain, respectively.
To utilize the data in the target domain, we use entropy loss for the target classifier $f_T$ :
\begin{equation}
\centering
\begin{aligned}
\mathcal{L}_T\left(f_T\left(Z^T\right)\right)=-\frac{1}{N_T} \sum_{i=1}^{N_T} \hat{y}^T_i \log \left(\hat{y}^T_i\right)
\label{equ: T_loss}
\end{aligned}
\end{equation} 
where $\hat{y}^T_i$ are the classification prediction for the $i$-th node in the target graph $v^T_i$. Finally, by combining $\mathcal{L}_{A}$, $\mathcal{L}_{S}$, $\mathcal{L}_{D}$ and $\mathcal{L}_{T}$, the overall loss function of our model can be represented as:
\begin{equation}
\centering
\begin{aligned}
    \mathcal L = \mathcal L_{A} + \alpha \mathcal {L}_{S} + \beta \mathcal L_{D} + \tau \mathcal L_{T}
\label{equ: all_loss}
\end{aligned}
\end{equation}
where $\alpha$, $\beta$ and $\tau$ are trade-off hyper-parameters. The parameters of the whole framework are updated via backpropagation.

\section{EXPERIMENT}

\begin{wraptable}{r}{0.45\textwidth}
\centering
\small
\scalebox{0.9}{\begin{tabular}{c|c|c|c|c}
\toprule[0.8pt]
Types                      & Datasets        & \#Node  & \#Edge               & \#Label              \\ 
\midrule[0.8pt]
\multirow{3}{*}{Airport} & USA      & 1,190 & 13,599   & \multirow{3}{*}{4} \\
                          & Brazil & 131 & 1,038                        &                    \\
                          & Europe     & 399 & 5,995                          &                    \\ 
\midrule[0.8pt]
\multirow{3}{*}{Citation} & ACMv9      & 9,360 & 15,556   & \multirow{3}{*}{5} \\
                          & Citationv1 & 8,935 & 15,098                         &                    \\
                          & DBLPv7     & 5,484 & 8,117                          &                    \\ 
\midrule[0.8pt]
\multirow{2}{*}{Social}   & Blog1    & 2,300 & 33,471  & \multirow{2}{*}{6} \\
                          & Blog2   & 2,896 & 53,836                         &                    \\ 
\midrule[0.8pt]
\multirow{2}{*}{Social}   & Germany    & 9,498 & 153,138  & \multirow{2}{*}{2} \\
                          & England    & 7,126 & 35,324                         &                    \\ 

\midrule[0.8pt]

{\multirow{6}{*}{MAG}}      & 
{US}      &
{ 132,558} & 
{697,450}   & 
{\multirow{6}{*}{20}} \\
                          & 
{CN}      &  {101,952} &  {285,561}                       &                    \\
                          & 
{DE}      & {43,032} & {126,683}                         &                    \\ 
                          & {JP}      & {37,498} & {90,944}                         &                    \\
                          & 
{RU}      & {32,833} & { 67,994  }                        &                    \\ 
                          &
{ FR}      & {29,262} & {78,222   }                      &                    \\

\midrule[0.8pt]

\end{tabular}}
\caption{Dataset Statistics.}
\label{tab:datasets}
\end{wraptable}

\subsection{Datasets}
To prove the superiority of our work on domain adaptation node classification tasks, we evaluate it on four types of datasets, including Airport dataset~\citep{ribeiro2017struc2vec}, Citation dataset~\citep{wu2020unsupervised}, Social dataset~\citep{liu2024rethinking} and Blog dataset~\citep{li2015unsupervised}.
The airport dataset involves three countries' airport traffic networks: USA (U), Brazil (B), and Europe (E), in which the node indicates the airport and the edge indicates the routes between two airports. The citation dataset includes three different citation networks: DBLPv8 (D) , ACMv9 (A), and Citationv2 (C), in which the node indicates the article and the edge indicates the citation relation between two articles. As for social networks, we choose Twitch gamer networks and Blog Network, which are collected from Germany(DE) and England(EN). Two disjoint Blog social networks, Blog1 (B1) and Blog2 (B2), which are extracted from the BlogCatalog dataset.
extracted from the BlogCatalog dataset. Because these four groups of dataset ingredients are generated from different data sources, their distributions are naturally diverse. 
For a comprehensive overview of these datasets, please refer to Tab \ref{tab:datasets}.

\begin{table*}[!t]
\centering
\small
\scalebox{0.9}{\begin{tabular}{|l|cccccc|cc|}
\toprule
Methods & U $\rightarrow$ B & U $\rightarrow$ E & B $\rightarrow$ U & B $\rightarrow$ E & E $\rightarrow$ U & E $\rightarrow$ B & DE $\rightarrow$ EN & EN $\rightarrow$ DE \\\midrule
GCN & 0.366 &	0.371 &	0.491 &	0.452 &	0.439 &	0.298 &	0.673 &	0.634  \\
$k$NN-GCN & 0.436 &	0.437 &	0.461 &	0.478 &	0.459 &	0.464  &	0.661 &	0.623 \\ \midrule
DANN & 0.501 &	0.386 &	0.402 &	0.350 &	0.436 &	0.538  &0.512 &	0.528\\\midrule
DANE& 0.531 &	0.472 &	0.491 &	0.489 &	0.461 &	0.520 &	0.642 &	0.644 \\ 
UDAGCN & 0.607 &	0.488 &	0.497 &	0.510 &	0.434 &	0.477 &	0.724 &	0.660  \\
ASN & 0.519 &	0.469 &	0.498 &	0.494 &	0.466 &	0.595 &0.550 &	0.679\\
EGI & 0.523 &0.451 &	0.417 &	0.454 &	0.452 &	0.588 &0.681 &	0.589 \\ 
GRADE-N & 0.550 &	0.457 &	0.497 &	0.506 &	0.463 &	0.588 &	0.749 &	0.661 \\
JHGDA  & \underline{0.695} &0.519 &	0.511 &	\underline{0.569} &	0.522 &	\bf0.740 &	\underline{0.766} &	\underline{0.737}\\
SpecReg & 0.481 &0.487 &0.513 &	0.546 &	0.436 &	0.527 &0.756 &	0.678  \\
{GIFI}  & {0.636}    &	{0.521}    &	{0.493}    &	{0.535}    &	{0.501}    &	{0.623}    &	{0.719} &	{0.705} \\
{PA}  & {0.679} &	{\underline{0.557}} &	{\underline{0.528}} &	{0.562}    &	{\underline{0.547}}    &	{0.529 }   & {0.677}     &	{0.760} \\ \midrule
{\bf GAA}  & \bf0.704 &	\bf0.563 &	\bf0.542 &	\bf0.573 &	\bf0.546 &\underline{0.691} &\bf0.779 &\bf0.751\\
\bottomrule
\end{tabular}}
\caption{Cross-network node classification on the Airport network.}
\label{tab:airport_classification}
\end{table*}

\begin{table*}[!t]
\centering
\small
\scalebox{0.9}{\begin{tabular}{|l|cccccc|cc|}
\toprule
Methods & A $\rightarrow$ D & D $\rightarrow$ A & A $\rightarrow$ C & C $\rightarrow$ A & C $\rightarrow$ D & D $\rightarrow$ C &  B1 $\rightarrow$ B2 & B2 $\rightarrow$ B1  \\\midrule
GCN & 0.632 &	0.578 &	0.675 &	0.635 &	0.666 &	0.654 & 0.408 &	0.451   \\
$k$NN-GCN  & 0.636 &	0.587 &	0.672 &	0.648 &	0.668 &	0.426 & 0.531 &	0.579 \\ \midrule
DANN & 0.488 &0.436 &0.520 &	0.518 &	0.518 &	0.465 & 0.409 &0.419   \\\midrule
DANE& 0.664 &	0.619 &	0.642 &	0.653 &	0.661 &	0.709 & 0.464 &	0.423 4  \\ 
UDAGCN  & 0.684 &	0.623 &	0.728 &	0.663 &	0.712 &	0.645 & 0.471 &	0.468   \\
ASN & 0.729 &0.723 &0.752 &	0.678 &	0.752 &	0.754 & 0.732 &0.524   \\
EGI & 0.647 & 0.557 &0.676 &	0.598 &	0.662 &	0.652 & 0.494 & 0.516    \\ 
GRADE-N & 0.701 &	0.660 &	0.736 &	0.687 &	0.722 &	0.687 & 0.567 &	0.541  \\
JHGDA & 0.755 &0.737 &	\underline{0.814} &	\underline{0.756} &	0.762 &	\underline{0.794} & 0.619 &0.643 \\
SpecReg & \underline{0.762} &0.654 &0.753 &	0.680 &	\underline{0.768} &	0.727  &0.661 &0.631 \\
{GIFI}  & {0.751} &	{0.737} &	{0.793} &	{0.755} &	{0.739} &	{0.751} & {0.653} &	{0.642} \\ 
{PA} & {0.752} &	{\underline{0.751}} &	{0.804} &	{0.768} &	{0.755} &	{0.780} & {\underline{0.662}} &	{\underline{0.654}} \\ \midrule
{\bf GAA} & \bf0.789 &	\bf0.754 &\bf0.824 &\bf0.782 &	\bf0.771 &	\bf0.798 & \bf0.681 &	\bf0.679   \\
\bottomrule
\end{tabular}}
\caption{Cross-network node classification on the Citation, Blog and Social network.}
\label{tab:citation_classification}
\end{table*}

\begin{table*}[!t]
\centering
\small
\scalebox{0.73}{\begin{tabular}{|l|cccccccccc|}
\toprule
Methods & US $\rightarrow$ CN & US $\rightarrow$ DE & US $\rightarrow$ JP & US $\rightarrow$ RU & US $\rightarrow$ FR & CN $\rightarrow$ US & CN $\rightarrow$ DE &  CN $\rightarrow$ JP & CN $\rightarrow$ RU & CN $\rightarrow$ FR \\\midrule
GCN &0.042 &0.168 &0.219 &	0.147 &	0.182 &	0.193 & 0.064  &0.160 &0.069 &0.067   \\
$k$NN-GCN  &0.092 &0.189 &0.269 &	0.186 &	0.213 &	0.210 & 0.133  &0.201 &0.105 &0.102 \\ \midrule
DANN &0.242 &0.263 &0.379 &	0.218 &	0.207 &	0.302 & 0.134 &0.214 &0.119 &0.107 \\\midrule
DANE &0.272 &0.250 &0.280 &	0.210 &	0.186 &	0.279 & 0.108 &0.228 &0.170 &0.184  \\ 
UDAGCN  & OOM &	OOM  &	OOM  &	OOM  &	OOM  &	OOM  & OOM  &	OOM &	OOM &	OOM  \\
ASN &0.290 &0.272 &0.291 &	0.222 &	0.199 &	0.268 & 0.121 &0.207 &0.189 &0.190   \\
EGI & OOM &	OOM  &	OOM  &	OOM  &	OOM  &	OOM  & OOM  &	OOM &	OOM  &	OOM   \\ 
GRADE-N &0.304 &0.299 &0.306 &	0.240 &	0.217 &	0.258 & 0.137 &0.210 &0.178 &0.199  \\
JHGDA & OOM &	OOM  &	OOM  &	OOM  &	OOM  &	OOM  & OOM  &	OOM &	OOM &	OOM\\
SpecReg &0.237 &0.267 &0.377 &	0.228 &	0.218 &	0.317 & 0.134 &0.199 &0.109 &116 \\
PA  & \underline{0.400} &	\underline{0.389} &	\underline{0.474} &	\underline{0.371} &	\underline{0.252} &	\underline{0.452} & \underline{0.262} &	\underline{0.383}  &	\underline{0.333} &	\underline{0.242}\\ \midrule
{\bf GAA} & \bf0.410 &	\bf0.401 &\bf0.492 &\bf0.372 &	\bf0.2881 &	\bf0.453 & \bf0.302 &	\bf0.400   &	\bf0.351 &	\bf0.293   \\
\bottomrule
\end{tabular}}
\caption{Cross-network node classification on MAG datasets.}
\label{tab:citation_classification}
\end{table*}

\subsection{Baselines}
We choose some representative methods to compare.
\textbf{GCN}~ ~\citep{kipf2016semi} further solves the efficiency problem by introducing first-order approximation of ChebNet. 
\textbf{$k$NN-GCN}~ ~\citep{wang2020gcn} use the sparse $k$-nearest neighbor graph 
calculated from feature matrix as the input graph of GCN and name it $k$NN-GCN.
\textbf{DANN}~ ~\citep{ganin2016domain} use a 2-layer perceptron to provide features and a gradient reverse layer (GRL) to learn node embeddings for domain classification
\textbf{DANE}~ ~\citep{zhang2019dane} shared distributions embedded space on different networks and further aligned them through adversarial learning regularization.
\textbf{UDAGCN}~ ~\citep{wu2020unsupervised} is a dual graph
convolutional network component learning framework for unsupervised GDA, which captures knowledge from local and global levels to adapt it by adversarial training.
\textbf{ASN}~ ~\citep{zhang2021adversarial} use the domain-specific features in the network to extract the domain-invariant shared features across networks.
\textbf{EGI}~ ~\citep{zhu2021transfer} through Ego-Graph Information maximization to analyze structure-relevant transferability regarding the difference between source-target graph.
\textbf{GRADE-N}~ ~\citep{wu2023non} propose a graph subtree discrepancy to measure the graph distribution shift between source and target graphs.
\textbf{JHGDA}~ ~\citep{shi2023improving} explore information from different levels of network hierarchy by hierarchical pooling model.
\textbf{SpecReg}~ ~\citep{you2022graph} achieve improving performance regularization inspired by cross-pollinating between the optimal transport DA and graph filter theories .
{\textbf{GIFI}~ ~\citep{qiao2024information} uses a parameterized graph reduction module and variational information bottleneck to filter out irrelevant information.}
{\textbf{PA}~ ~\citep{liu2024pairwise} mitigates distribution shifts in graph data by recalibrating edge influences to handle structure shifts and adjusting classification losses to tackle label shifts.}

\subsection{Experimental Setup}
The experiments are implemented in the PyTorch platform using an Intel(R) Xeon(R) Silver 4210R CPU @ 2.40GHz, and GeForce RTX A5000 24G GPU.
Technically, two layers GCN is built and we train our model by utilizing the Adam~\citep{adam} optimizer with learning rate ranging from 0.0001 to 0.0005. In order to prevent over-fitting, we set the dropout rate to 0.5.
In addition, we set weight decay $\in \left\{1e-4, \cdots, 5e-3 \right\}$ and $k$ $\in \left\{1, \cdots, 10 \right\}$ for $k$NN graph. For fairness, we use the same parameter settings for all the cross-domain node classification methods in our experiment, except for some special cases. For GCN, UDA-GCN, and JHGDA the GCNs of both the source and target networks contain two hidden layers $(L = 2)$ with structure as $128-16$. The dropout rate for each GCN layer is set to $0.3$.
We repeatedly train and test our model for five times with the same partition of dataset and then report the average of ACC.

\subsection{Cross-network Node Classification Results}
The results of experiments are summarized in Table \ref{tab:airport_classification} and \ref{tab:citation_classification}, where the best performance is highlighted in boldface.
Some results are directly taken from~\citep{shi2023improving,pang2023sa}. We have the  following findings:
It can be seen that our proposed method boosts the performance of SOTA methods across most evaluation metrics on four group datasets with 16 tasks, which proves its effectiveness. Particularly, compared with other optimal performances in all datasets, GAA achieves a maximum average improvement of $1.80\%$ for ACC. This illustrates that our proposed model can effectively utilize node attribute information. Our GAA achieves much better performances than SpecReg and JHGDA on all of the metrics in a dataset of Airport and most of the metrics in a dataset of Citation. This can be explained by our method's use of attribute and topology structure.
In most cases, GAA produces better performance than GRADE-
N~\citep{wu2023non} and JHGDA~\citep{shi2023improving}, which were published in 2023. This verifies the advantage of our approach.
On most occasions, the feature graph produces a better result than the original graph. For example, in airport data, $ k$NN-GCN performance averages better than $5.30\%$ to GCN, and in citation datasets, performance averages better than $0.60\%$ to GCN. Our findings affirm that the observed discrepancy in node attributes surpasses that of the topological misalignment, thus suggesting that the alignment of node attributes holds potential for yielding more substantial enhancements.

\subsection{Ablation Study}
To validate the effectiveness of different components in our model, we compare GAA with its three variants on  Citation and Airport datasets.
\begin{itemize}
\item \textbf{GAA$_1$}: GAA without cross-view similarity matrix Refinement to show the importance of comprehensive attribute and structure information.
\item \textbf{GAA$_2$}: GAA without $\mathcal{L}_{A}$ to show the impact of attribute benefit alignment.
\item \textbf{GAA$_3$}: GAA without $\mathcal{L}_{A}$ and remove channel feature graph and only utilize $\mathcal{L}_{D}$ to show the effect of attribute(feature) graph impact.
\end{itemize}

\begin{wrapfigure}{R}{0.50\textwidth}
    \vspace{-2em}
    \subfigure[Citation]{
    \centering
    \begin{minipage}[b]{0.24\textwidth}
    \includegraphics[width=1.0\textwidth]{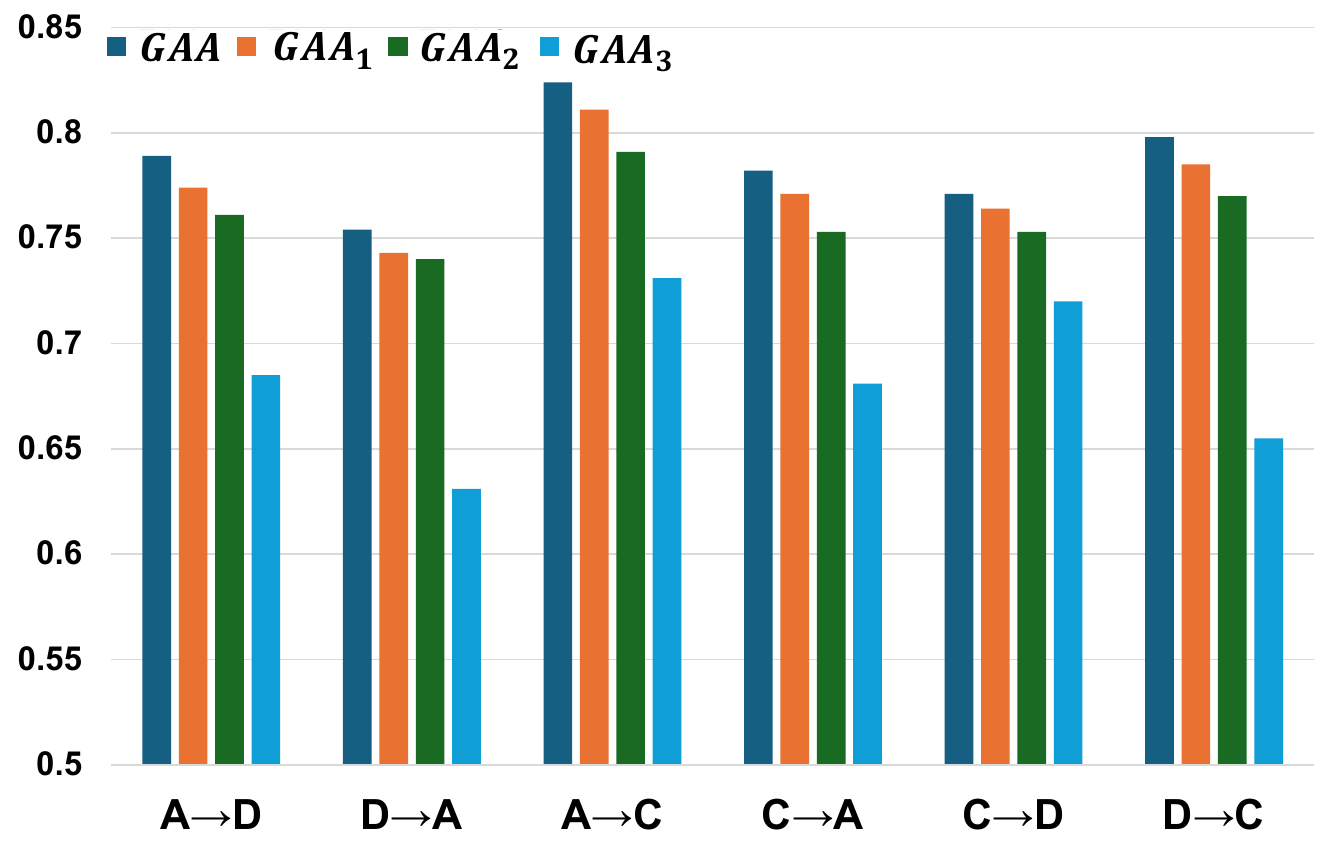}
    \end{minipage}
    }\hspace{-1em}
    \subfigure[Airport]{
    \centering
    \begin{minipage}[b]{0.24\textwidth}
    \includegraphics[width=1.0\textwidth]{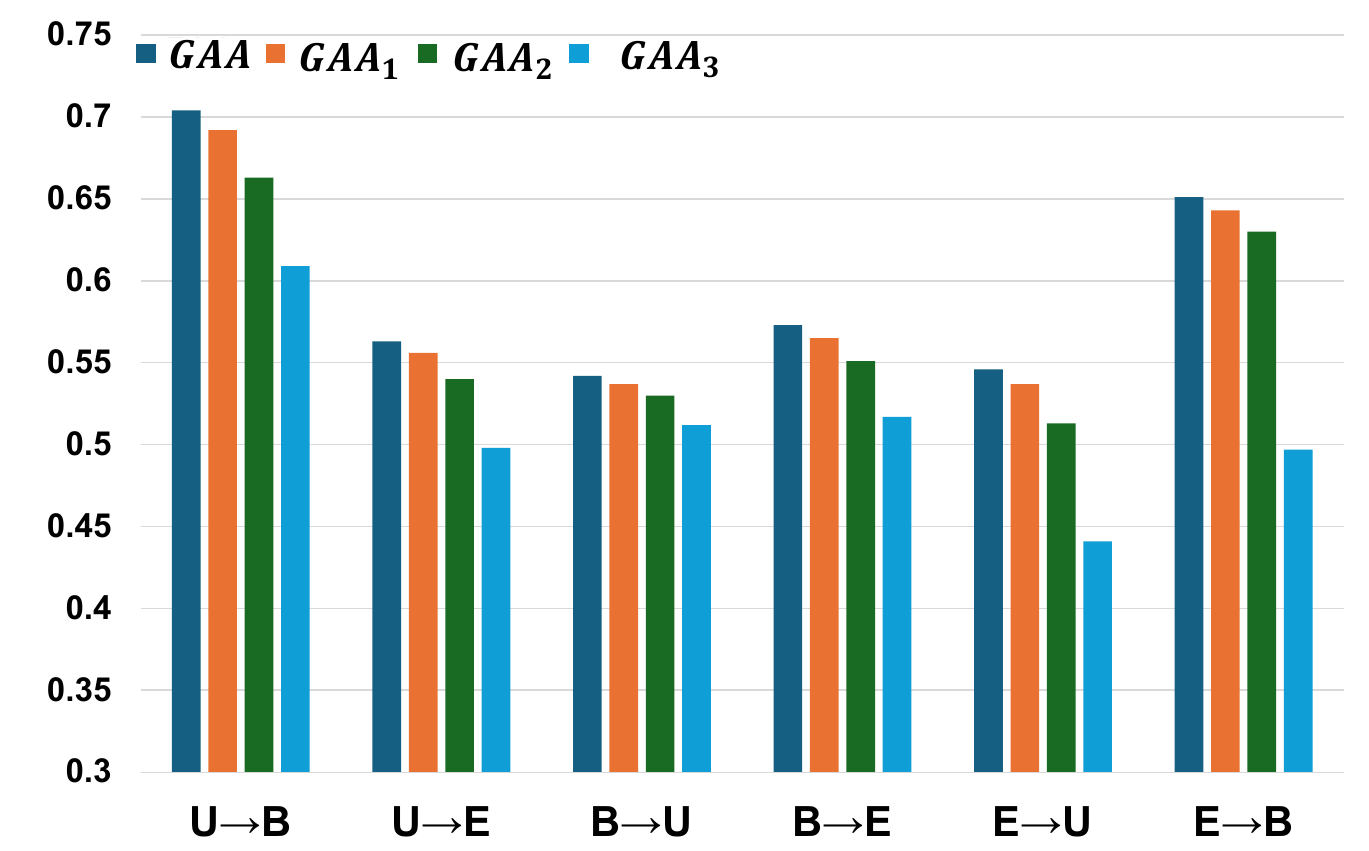}
    \end{minipage}
    }
    \vskip -1.3em
    \caption{\label{fig:ablation} The classification accuracy of GAA and its variants on citation datasets and airport dataset.}
    \vskip -1.5em
\end{wrapfigure}


According to Figure\ref{fig:ablation}, we can draw the following conclusions: (1) The results of GAA are consistently better than all variants, indicating the rationality of our model. (2) Both topology and feature information are crucial to domain adaptation. (3) The cross-view similarity matrix can improve performance by enhancing and integrating feature information, benefiting the synergistic refinement of both attribute and topology. 

\subsection{Parameter Analysis}
\begin{figure*}[!t]
\centering
\includegraphics[width=1.0\linewidth]{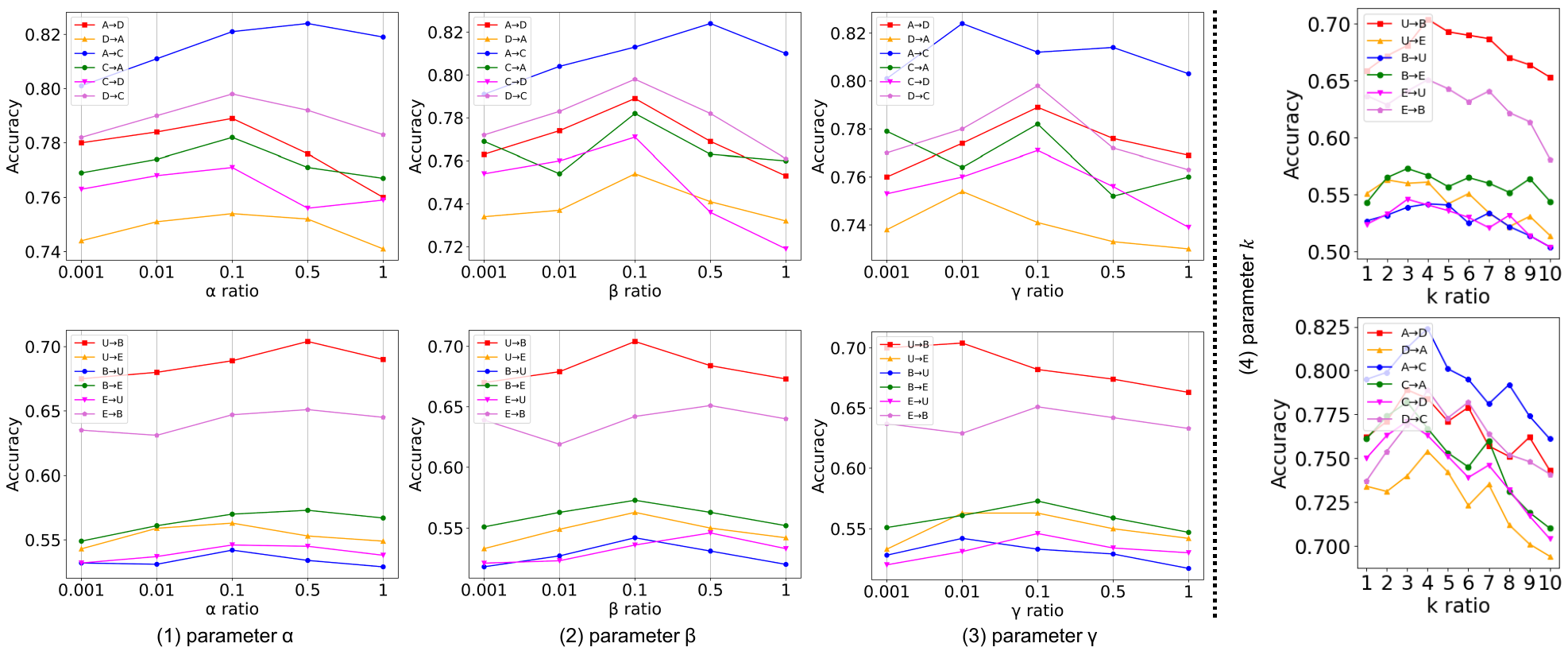}
\caption{The influence of parameters $\alpha$, $\beta$, $\tau$ and $k$ on Citation and Airport dataset.}
\label{fig:alpha_beta_gamma}
\end{figure*}

In this section, we analyze the sensitivity of the parameters of our method on the Airport dataset and  Citation dataset. As shown in Figure.\ref{fig:alpha_beta_gamma} in Subfigure (4), the accuracy usually peaks at $2-3$ with $k$. This is reasonable since increasing $k$ means more high-order proximity information is incorporated. On the other hand, extremely large $k$ could also introduce noise that will deteriorate the performance. 
From Figure.\ref{fig:alpha_beta_gamma} Subfigure (1) (2) (3), we can see GAA has competitive performance on a large range of values, which suggests the stability of our method. 

\section{Conclusion}
In this paper, we propose GAA framework to solve the GDA problem in cross-network node classification tasks. {We reveal the importance of both attributes and topology in GDA through both empirical and theoretical analysis, which minimizes attribute and topology distribution shifts based on intrinsic graph properties.} Comprehensive experiments verify the superiority of our approach. In the future, we may strive to design new frameworks for other cross-network learning tasks, including link-level and graph-level. We will also deep into graph domain adaptation theory for developing more powerful models.

\section{ACKNOWLEDGMENTS}
This work is supported by the Natural Sciences and Engineering Research Council of Canada (NSERC) Discovery Grants program. This work is also supported by the Vector Scholarship in Artificial Intelligence, provided through the Vector Institute.

\normalem
\bibliography{iclr2025_conference}

\begin{thebibliography}{49}
\providecommand{\natexlab}[1]{#1}
\providecommand{\url}[1]{\texttt{#1}}
\expandafter\ifx\csname urlstyle\endcsname\relax
  \providecommand{\doi}[1]{doi: #1}\else
  \providecommand{\doi}{doi: \begingroup \urlstyle{rm}\Url}\fi

\bibitem[Chen et~al.(2019)Chen, Song, Li, and Wu]{chen2019graph}
Yiming Chen, Shiji Song, Shuang Li, and Cheng Wu.
\newblock A graph embedding framework for maximum mean discrepancy-based domain adaptation algorithms.
\newblock \emph{IEEE Transactions on Image Processing}, 29:\penalty0 199--213, 2019.

\bibitem[Dai et~al.(2022)Dai, Wu, Xiao, Shen, and Wang]{dai2022graph}
Quanyu Dai, Xiao-Ming Wu, Jiaren Xiao, Xiao Shen, and Dan Wang.
\newblock Graph transfer learning via adversarial domain adaptation with graph convolution.
\newblock \emph{IEEE Transactions on Knowledge and Data Engineering}, 35\penalty0 (5):\penalty0 4908--4922, 2022.

\bibitem[Fang et~al.(2022)Fang, Wen, Kang, and Liu]{fang2022structure}
Ruiyi Fang, Liangjian Wen, Zhao Kang, and Jianzhuang Liu.
\newblock Structure-preserving graph representation learning.
\newblock In \emph{2022 IEEE International Conference on Data Mining (ICDM)}, pp.\  927--932. IEEE, 2022.

\bibitem[Ganin et~al.(2016)Ganin, Ustinova, Ajakan, Germain, Larochelle, Laviolette, March, and Lempitsky]{ganin2016domain}
Yaroslav Ganin, Evgeniya Ustinova, Hana Ajakan, Pascal Germain, Hugo Larochelle, Fran{\c{c}}ois Laviolette, Mario March, and Victor Lempitsky.
\newblock Domain-adversarial training of neural networks.
\newblock \emph{Journal of machine learning research}, 17\penalty0 (59):\penalty0 1--35, 2016.

\bibitem[Guo et~al.(2022)Guo, Wang, Yan, Lou, Feng, Zhu, Chen, He, and Yu]{guo2022learning}
Gaoyang Guo, Chaokun Wang, Bencheng Yan, Yunkai Lou, Hao Feng, Junchao Zhu, Jun Chen, Fei He, and Philip Yu.
\newblock Learning adaptive node embeddings across graphs.
\newblock \emph{IEEE Transactions on Knowledge and Data Engineering}, 2022.

\bibitem[Jing et~al.(2021)Jing, Xu, and Ding]{jing2021towards}
Taotao Jing, Bingrong Xu, and Zhengming Ding.
\newblock Towards fair knowledge transfer for imbalanced domain adaptation.
\newblock \emph{IEEE Transactions on Image Processing}, 30:\penalty0 8200--8211, 2021.

\bibitem[Kang et~al.(2024)Kang, Xie, Li, and Pan]{kang2024cdc}
Zhao Kang, Xuanting Xie, Bingheng Li, and Erlin Pan.
\newblock Cdc: A simple framework for complex data clustering.
\newblock \emph{arXiv preprint arXiv:2403.03670}, 2024.

\bibitem[Kingma \& Ba(2015)Kingma and Ba]{adam}
Diederick~P Kingma and Jimmy Ba.
\newblock Adam: A method for stochastic optimization.
\newblock In \emph{International Conference on Learning Representations (ICLR)}, 2015.

\bibitem[Kipf \& Welling(2016)Kipf and Welling]{kipf2016semi}
Thomas~N Kipf and Max Welling.
\newblock Semi-supervised classification with graph convolutional networks.
\newblock \emph{arXiv preprint arXiv:1609.02907}, 2016.

\bibitem[Li et~al.(2024)Li, Pan, and Kang]{li2024pc}
Bingheng Li, Erlin Pan, and Zhao Kang.
\newblock Pc-conv: Unifying homophily and heterophily with two-fold filtering.
\newblock In \emph{Proceedings of the AAAI Conference on Artificial Intelligence}, volume~38, pp.\  13437--13445, 2024.

\bibitem[Li et~al.(2025)Li, Xie, Lei, Fang, and Kang]{li2025simplified}
Bingheng Li, Xuanting Xie, Haoxiang Lei, Ruiyi Fang, and Zhao Kang.
\newblock Simplified pcnet with robustness.
\newblock \emph{Neural Networks}, 184:\penalty0 107099, 2025.

\bibitem[Li et~al.(2015)Li, Hu, Tang, and Liu]{li2015unsupervised}
Jundong Li, Xia Hu, Jiliang Tang, and Huan Liu.
\newblock Unsupervised streaming feature selection in social media.
\newblock In \emph{Proceedings of the 24th ACM International on Conference on Information and Knowledge Management}, pp.\  1041--1050, 2015.

\bibitem[Liu et~al.(2024{\natexlab{a}})Liu, Fang, Zhang, Gu, Zhou, Wang, and Bu]{liu2024rethinking}
Meihan Liu, Zeyu Fang, Zhen Zhang, Ming Gu, Sheng Zhou, Xin Wang, and Jiajun Bu.
\newblock Rethinking propagation for unsupervised graph domain adaptation.
\newblock In \emph{Proceedings of the AAAI Conference on Artificial Intelligence}, volume~38, pp.\  13963--13971, 2024{\natexlab{a}}.

\bibitem[Liu et~al.(2024{\natexlab{b}})Liu, Zou, Zhao, and Li]{liu2024pairwise}
Shikun Liu, Deyu Zou, Han Zhao, and Pan Li.
\newblock Pairwise alignment improves graph domain adaptation.
\newblock \emph{arXiv preprint arXiv:2403.01092}, 2024{\natexlab{b}}.

\bibitem[Ma et~al.(2021)Ma, Deng, and Mei]{ma2021subgroup}
Jiaqi Ma, Junwei Deng, and Qiaozhu Mei.
\newblock Subgroup generalization and fairness of graph neural networks.
\newblock \emph{Advances in Neural Information Processing Systems}, 34:\penalty0 1048--1061, 2021.

\bibitem[Mao et~al.(2023)Mao, Li, Shomer, Li, Fan, Ma, Zhao, Shah, and Tang]{mao2023revisiting}
Haitao Mao, Juanhui Li, Harry Shomer, Bingheng Li, Wenqi Fan, Yao Ma, Tong Zhao, Neil Shah, and Jiliang Tang.
\newblock Revisiting link prediction: A data perspective.
\newblock \emph{arXiv preprint arXiv:2310.00793}, 2023.

\bibitem[Mao et~al.(2024)Mao, Du, Zheng, Fu, Li, Chen, Han, and Zhang]{mao2021source}
Haitao Mao, Lun Du, Yujia Zheng, Qiang Fu, Zelin Li, Xu~Chen, Shi Han, and Dongmei Zhang.
\newblock Source free unsupervised graph domain adaptation.
\newblock \emph{The 17th ACM International Conference on Web Search and Data Mining}, 2024.

\bibitem[Pang et~al.(2023)Pang, Wang, Tang, Xiao, and Yin]{pang2023sa}
Jinhui Pang, Zixuan Wang, Jiliang Tang, Mingyan Xiao, and Nan Yin.
\newblock Sa-gda: Spectral augmentation for graph domain adaptation.
\newblock In \emph{Proceedings of the 31st ACM International Conference on Multimedia}, pp.\  309--318, 2023.

\bibitem[Pu et~al.(2025)Pu, Xu, Fang, Bao, Ling, and Wang]{pu2024leveraging}
Ruizhi Pu, Gezheng Xu, Ruiyi Fang, Binkun Bao, Charles~X Ling, and Boyu Wang.
\newblock Leveraging group classification with descending soft labeling for deep imbalanced regression.
\newblock \emph{AAAI}, 2025.

\bibitem[Qian et~al.(2024)Qian, Li, and Kang]{qian2024upper}
Xiaowei Qian, Bingheng Li, and Zhao Kang.
\newblock Upper bounding barlow twins: A novel filter for multi-relational clustering.
\newblock In \emph{Proceedings of the AAAI Conference on Artificial Intelligence}, volume~38, pp.\  14660--14668, 2024.

\bibitem[Qiao et~al.(2023)Qiao, Luo, Xiao, Dong, Zhou, and Xiong]{qiao2023semi}
Ziyue Qiao, Xiao Luo, Meng Xiao, Hao Dong, Yuanchun Zhou, and Hui Xiong.
\newblock Semi-supervised domain adaptation in graph transfer learning.
\newblock \emph{IJCAI}, 2023.

\bibitem[Qiao et~al.(2024)Qiao, Xiao, Guo, Luo, and Xiong]{qiao2024information}
Ziyue Qiao, Meng Xiao, Weiyu Guo, Xiao Luo, and Hui Xiong.
\newblock Information filtering and interpolating for semi-supervised graph domain adaptation.
\newblock \emph{Pattern Recognition}, 153:\penalty0 110498, 2024.

\bibitem[Ribeiro et~al.(2017)Ribeiro, Saverese, and Figueiredo]{ribeiro2017struc2vec}
Leonardo~FR Ribeiro, Pedro~HP Saverese, and Daniel~R Figueiredo.
\newblock struc2vec: Learning node representations from structural identity.
\newblock In \emph{Proceedings of the 23rd ACM SIGKDD international conference on knowledge discovery and data mining}, pp.\  385--394, 2017.

\bibitem[Shen \& Chung(2019)Shen and Chung]{shen2019network}
Xiao Shen and Fu~Lai Chung.
\newblock Network embedding for cross-network node classification.
\newblock \emph{arXiv preprint arXiv:1901.07264}, 2019.

\bibitem[Shen et~al.(2020{\natexlab{a}})Shen, Dai, Chung, Lu, and Choi]{shen2020adversarial}
Xiao Shen, Quanyu Dai, Fu-lai Chung, Wei Lu, and Kup-Sze Choi.
\newblock Adversarial deep network embedding for cross-network node classification.
\newblock In \emph{Proceedings of the AAAI conference on artificial intelligence}, volume~34, pp.\  2991--2999, 2020{\natexlab{a}}.

\bibitem[Shen et~al.(2020{\natexlab{b}})Shen, Dai, Mao, Chung, and Choi]{shen2020network}
Xiao Shen, Quanyu Dai, Sitong Mao, Fu-lai Chung, and Kup-Sze Choi.
\newblock Network together: Node classification via cross-network deep network embedding.
\newblock \emph{IEEE Transactions on Neural Networks and Learning Systems}, 32\penalty0 (5):\penalty0 1935--1948, 2020{\natexlab{b}}.

\bibitem[Shen et~al.(2023)Shen, Pan, Choi, and Zhou]{shen2023domain}
Xiao Shen, Shirui Pan, Kup-Sze Choi, and Xi~Zhou.
\newblock Domain-adaptive message passing graph neural network.
\newblock \emph{Neural Networks}, 164:\penalty0 439--454, 2023.

\bibitem[Shi et~al.(2023)Shi, Wang, Guo, Shao, Shen, and Cheng]{shi2023improving}
Boshen Shi, Yongqing Wang, Fangda Guo, Jiangli Shao, Huawei Shen, and Xueqi Cheng.
\newblock Improving graph domain adaptation with network hierarchy.
\newblock In \emph{Proceedings of the 32nd ACM International Conference on Information and Knowledge Management}, pp.\  2249--2258, 2023.

\bibitem[Shi et~al.(2024)Shi, Wang, Guo, Xu, Shen, and Cheng]{shi2024graph}
Boshen Shi, Yongqing Wang, Fangda Guo, Bingbing Xu, Huawei Shen, and Xueqi Cheng.
\newblock Graph domain adaptation: Challenges, progress and prospects.
\newblock \emph{arXiv preprint arXiv:2402.00904}, 2024.

\bibitem[Shui et~al.(2023)Shui, Pu, Xu, Wen, Zhou, Gagn{\'e}, Ling, and Wang]{shui2023towards}
Changjian Shui, Ruizhi Pu, Gezheng Xu, Jun Wen, Fan Zhou, Christian Gagn{\'e}, Charles~X Ling, and Boyu Wang.
\newblock Towards more general loss and setting in unsupervised domain adaptation.
\newblock \emph{IEEE Transactions on Knowledge and Data Engineering}, 35\penalty0 (10):\penalty0 10140--10150, 2023.

\bibitem[Wang et~al.(2022)Wang, Fink, Van~Gool, and Dai]{wang2022continual}
Qin Wang, Olga Fink, Luc Van~Gool, and Dengxin Dai.
\newblock Continual test-time domain adaptation.
\newblock In \emph{Proceedings of the IEEE/CVF Conference on Computer Vision and Pattern Recognition}, pp.\  7201--7211, 2022.

\bibitem[Wang et~al.(2020)Wang, Zhu, Bo, Cui, Shi, and Pei]{wang2020gcn}
Xiao Wang, Meiqi Zhu, Deyu Bo, Peng Cui, Chuan Shi, and Jian Pei.
\newblock Am-gcn: Adaptive multi-channel graph convolutional networks.
\newblock In \emph{Proceedings of the 26th ACM SIGKDD International conference on knowledge discovery \& data mining}, pp.\  1243--1253, 2020.

\bibitem[Wu et~al.(2023)Wu, He, and Ainsworth]{wu2023non}
Jun Wu, Jingrui He, and Elizabeth Ainsworth.
\newblock Non-iid transfer learning on graphs.
\newblock In \emph{Proceedings of the AAAI Conference on Artificial Intelligence}, volume~37, pp.\  10342--10350, 2023.

\bibitem[Wu et~al.(2020)Wu, Pan, Zhou, Chang, and Zhu]{wu2020unsupervised}
Man Wu, Shirui Pan, Chuan Zhou, Xiaojun Chang, and Xingquan Zhu.
\newblock Unsupervised domain adaptive graph convolutional networks.
\newblock In \emph{Proceedings of The Web Conference 2020}, pp.\  1457--1467, 2020.

\bibitem[Xie et~al.(2024{\natexlab{a}})Xie, Li, Pan, Guo, Kang, and Chen]{xie2024one}
Xuanting Xie, Bingheng Li, Erlin Pan, Zhaochen Guo, Zhao Kang, and Wenyu Chen.
\newblock One node one model: Featuring the missing-half for graph clustering.
\newblock \emph{arXiv preprint arXiv:2412.09902}, 2024{\natexlab{a}}.

\bibitem[Xie et~al.(2024{\natexlab{b}})Xie, Pan, Kang, Chen, and Li]{xie2024provable}
Xuanting Xie, Erlin Pan, Zhao Kang, Wenyu Chen, and Bingheng Li.
\newblock Provable filter for real-world graph clustering.
\newblock \emph{arXiv preprint arXiv:2403.03666}, 2024{\natexlab{b}}.

\bibitem[Xu et~al.(2025)Xu, Yi, Xu, Li, Pu, Shui, Ling, McLeod, and Wang]{xu2025unraveling}
Gezheng Xu, Li~Yi, Pengcheng Xu, Jiaqi Li, Ruizhi Pu, Changjian Shui, Charles Ling, A~Ian McLeod, and Boyu Wang.
\newblock Unraveling the mysteries of label noise in source-free domain adaptation: Theory and practice.
\newblock \emph{IEEE Transactions on Pattern Analysis and Machine Intelligence}, 2025.

\bibitem[Xu et~al.(2023)Xu, Wang, and Ling]{xu2023class}
Pengcheng Xu, Boyu Wang, and Charles Ling.
\newblock Class overwhelms: Mutual conditional blended-target domain adaptation.
\newblock \emph{Proceedings of the AAAI Conference on Artificial Intelligence}, 2023.

\bibitem[Xu et~al.(2022)Xu, He, Lee, Wang, and Wang]{xu2022graph}
Zihao Xu, Hao He, Guang-He Lee, Yuyang Wang, and Hao Wang.
\newblock Graph-relational domain adaptation.
\newblock \emph{arXiv preprint arXiv:2202.03628}, 2022.

\bibitem[Yan \& Wang(2020)Yan and Wang]{yan2020graphae}
Bencheng Yan and Chaokun Wang.
\newblock Graphae: adaptive embedding across graphs.
\newblock In \emph{2020 IEEE 36th International Conference on Data Engineering (ICDE)}, pp.\  1958--1961. IEEE, 2020.

\bibitem[Yang et~al.(2021)Yang, Wang, Van De~Weijer, Herranz, and Jui]{yang2021generalized}
Shiqi Yang, Yaxing Wang, Joost Van De~Weijer, Luis Herranz, and Shangling Jui.
\newblock Generalized source-free domain adaptation.
\newblock In \emph{Proceedings of the IEEE/CVF International Conference on Computer Vision}, pp.\  8978--8987, 2021.

\bibitem[You et~al.(2022)You, Chen, Wang, and Shen]{you2022graph}
Yuning You, Tianlong Chen, Zhangyang Wang, and Yang Shen.
\newblock Graph domain adaptation via theory-grounded spectral regularization.
\newblock In \emph{The Eleventh International Conference on Learning Representations}, 2022.

\bibitem[Yuan et~al.(2023)Yuan, Luo, Qin, Mao, Ju, and Zhang]{yuan2023alex}
Jingyang Yuan, Xiao Luo, Yifang Qin, Zhengyang Mao, Wei Ju, and Ming Zhang.
\newblock Alex: Towards effective graph transfer learning with noisy labels.
\newblock In \emph{Proceedings of the 31st ACM International Conference on Multimedia}, pp.\  3647--3656, 2023.

\bibitem[Zeng et~al.(2024{\natexlab{a}})Zeng, Huang, Qi, Ling, and Wang]{zengtowards}
QIUHAO Zeng, Long-Kai Huang, CHEN Qi, Charles Ling, and Boyu Wang.
\newblock Towards understanding evolving patterns in sequential data.
\newblock In \emph{The Thirty-eighth Annual Conference on Neural Information Processing Systems}, 2024{\natexlab{a}}.

\bibitem[Zeng et~al.(2024{\natexlab{b}})Zeng, Shui, Huang, Liu, Chen, Ling, and Wang]{zeng2024latent}
Qiuhao Zeng, Changjian Shui, Long-Kai Huang, Peng Liu, Xi~Chen, Charles Ling, and Boyu Wang.
\newblock Latent trajectory learning for limited timestamps under distribution shift over time.
\newblock In \emph{The Twelfth International Conference on Learning Representations}, 2024{\natexlab{b}}.

\bibitem[Zhang et~al.(2021)Zhang, Du, Xie, and Wang]{zhang2021adversarial}
Xiaowen Zhang, Yuntao Du, Rongbiao Xie, and Chongjun Wang.
\newblock Adversarial separation network for cross-network node classification.
\newblock In \emph{Proceedings of the 30th ACM International Conference on Information \& Knowledge Management}, pp.\  2618--2626, 2021.

\bibitem[Zhang et~al.(2019)Zhang, Song, Du, Yang, and Jin]{zhang2019dane}
Yizhou Zhang, Guojie Song, Lun Du, Shuwen Yang, and Yilun Jin.
\newblock Dane: Domain adaptive network embedding.
\newblock \emph{arXiv preprint arXiv:1906.00684}, 2019.

\bibitem[Zhu et~al.(2022)Zhu, She, Chen, You, Wang, and Lu]{zhu2022weakly}
Lei Zhu, Qi~She, Qian Chen, Yunfei You, Boyu Wang, and Yanye Lu.
\newblock Weakly supervised object localization as domain adaption.
\newblock In \emph{Proceedings of the IEEE/CVF Conference on Computer Vision and Pattern Recognition}, pp.\  14637--14646, 2022.

\bibitem[Zhu et~al.(2021)Zhu, Yang, Xu, Wang, Zhang, and Han]{zhu2021transfer}
Qi~Zhu, Carl Yang, Yidan Xu, Haonan Wang, Chao Zhang, and Jiawei Han.
\newblock Transfer learning of graph neural networks with ego-graph information maximization.
\newblock \emph{Advances in Neural Information Processing Systems}, 34:\penalty0 1766--1779, 2021.

\end{thebibliography}
\bibliographystyle{iclr2025_conference}

\appendix

\section{Proof of Proposition \ref{equ: Proposition}}

To facilitate the analysis, we adopt the following data assumption:

\setcounter{definition}{0}
\begin{definition}\label{def:1}
    The generated nodes consist of two disjoint sets, denoted as $c_0$ and $c_1$. Each node feature $x$ is sampled from $N(\mu_i, \sigma_i)$ for $i \in \{0, 1\}$.

    Each set $c_i$ corresponds to the source graph and target graph compositions, respectively: $c_i^{(S)}$ and $c_i^{(T)}$. The class distribution is balanced, such that $\mathbb{P}(\mathbb{Y} = c_0) = \mathbb{P}(\mathbb{Y} = c_1)$.
\end{definition}

\begin{theorem} \label{lemma:csbm_diff_detail} 
    For nodes $s \in V_S$ and $t \in V_T$ with aggregated features $\rvf = \text{GNN}(x)$, the following inequality holds:
\begin{equation}
    ||\mathbb{P}(y_u = c_0 | \rvf_u) - \mathbb{P}(y_v = c_0 | \rvf_v)|| \le O(||\rvf_u - \rvf_v|| + ||\rvf_v - \vmu^{(S)}_{1}|| + ||\rvf_v - \vmu^{(T)}_{1}||).
\end{equation}
\end{theorem}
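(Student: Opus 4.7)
My plan is to reduce the claim to a Lipschitz estimate on the log-posterior and then decompose the difference with a single triangle inequality. The class-posteriors are sigmoids of a log-likelihood ratio (by Bayes' rule and the balanced prior in Definition~\ref{def:1}), so the bound should follow from \textbf{(i)} the Lipschitz constant of $\sigma$, \textbf{(ii)} a smoothness bound on the log-odds as a function of $\rvf$, and \textbf{(iii)} a separate cross-domain comparison at a fixed feature.

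Concretely, I would first apply Bayes' rule with the balanced prior to write
\begin{equation*}
\mathbb{P}(y_u = c_0 \mid \rvf_u) \;=\; \sigma\bigl(g_S(\rvf_u)\bigr), \qquad
\mathbb{P}(y_v = c_0 \mid \rvf_v) \;=\; \sigma\bigl(g_T(\rvf_v)\bigr),
\end{equation*}
where $g_D(\rvf) = \log \frac{p_D(\rvf \mid c_0)}{p_D(\rvf \mid c_1)}$ for $D \in \{S,T\}$. Under Definition~\ref{def:1}, the class-conditionals are Gaussian $N(\vmu_i^{(D)}, \sigma_i)$, so each $g_D(\rvf)$ is a (at most quadratic) closed-form expression involving $\rvf - \vmu_0^{(D)}$ and $\rvf - \vmu_1^{(D)}$. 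Using $|\sigma(a) - \sigma(b)| \le \tfrac14 |a - b|$ reduces the problem to bounding $|g_S(\rvf_u) - g_T(\rvf_v)|$.

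Next I would insert $g_S(\rvf_v)$ and split via the triangle inequality:
\begin{equation*}
|g_S(\rvf_u) - g_T(\rvf_v)| \;\le\; \underbrace{|g_S(\rvf_u) - g_S(\rvf_v)|}_{\text{same-domain smoothness}} \;+\; \underbrace{|g_S(\rvf_v) - g_T(\rvf_v)|}_{\text{cross-domain gap at } \rvf_v}.
\end{equation*}
The first term is bounded by the local Lipschitz constant of $g_S$ at $\rvf_v$ applied to $\|\rvf_u - \rvf_v\|$, which yields the $\|\rvf_u - \rvf_v\|$ contribution (up to a constant depending on the variances). For the second term I would expand the quadratic Gaussian form of $g_S - g_T$ at $\rvf_v$ and complete the square so that the residual is expressed in terms of $\|\rvf_v - \vmu_1^{(S)}\|$ and $\|\rvf_v - \vmu_1^{(T)}\|$, matching the stated right-hand side.

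The main obstacle I anticipate is the bookkeeping in the last step: when the class variances differ the log-odds is genuinely quadratic, so the cross-domain difference $g_S(\rvf_v) - g_T(\rvf_v)$ contains cross terms of the form $\langle \rvf_v, \vmu_1^{(S)} - \vmu_1^{(T)}\rangle$ and $\|\vmu_0^{(S)}\|^2 - \|\vmu_0^{(T)}\|^2$. I will need to argue that every such term can be absorbed into the two stated $\|\rvf_v - \vmu_1^{(\cdot)}\|$ quantities without introducing a separate $\|\rvf_v - \vmu_0^{(\cdot)}\|$ term; this likely relies on the symmetry between the two classes (so that the $c_0$-mean contributions are either constant or can be rewritten using $\vmu_1$ terms), which the $O(\cdot)$ notation then absorbs. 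The rest of the argument---Bayes $\to$ sigmoid $\to$ Lipschitz $\to$ triangle inequality---is routine.
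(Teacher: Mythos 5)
Your proposal is correct, and it reaches exactly the terms the paper derives, but via a cleaner organization. The paper works directly with the ratio form of the posterior: it puts the two Bayes ratios over a common denominator, asserts the denominator is bounded below (the $\operatorname{exp}(-A)$ step, which is left rather vague), and then bounds the difference of products of Gaussian exponentials by the difference of their exponents, producing the inner products $\langle \vmu_i^{(T)}-\vmu_i^{(S)},\, 2\rvf - \vmu_i^{(S)}-\vmu_i^{(T)}\rangle$ for $i\in\{0,1\}$ before absorbing the $\vmu_0$-centered quantities into constants via the class separations $\delta^{\vmu}_{(\cdot)}$. Your framing $\mathbb{P}(y=c_0\mid \rvf)=\sigma\bigl(g_D(\rvf)\bigr)$ with $|\sigma(a)-\sigma(b)|\le \tfrac14|a-b|$ achieves the same reduction with a uniform constant and no boundedness argument at all, so it actually repairs the sketchiest step of the paper's proof; your triangle-inequality split into a same-domain smoothness term and a cross-domain gap at $\rvf_v$ then reproduces the paper's two contributions (the $\|\rvf_u-\rvf_v\|$ term with constant proportional to $\delta^{\vmu}_S/\sigma^2$, and the $\Delta^{\vmu}_i\,\|2\rvf_v-\vmu_i^{(S)}-\vmu_i^{(T)}\|$ terms). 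Your anticipated obstacle resolves exactly as you guess: $\|\rvf_v-\vmu_0^{(D)}\|\le \|\rvf_v-\vmu_1^{(D)}\|+\delta^{\vmu}_D$ absorbs the class-$0$ terms, and the absorption is harmless anyway since the paper's downstream use of this theorem (in the proof of the Proposition) retains all four mean-centered terms explicitly. One caveat you should make explicit rather than defer: both your argument and the paper's need the homoscedastic case. Definition~1 nominally allows $N(\vmu_i,\sigma_i)$, but the paper's densities use a single shared $\sigma^2$, and only then is $g_D$ affine in $\rvf$, hence globally Lipschitz. If the variances genuinely differ, the log-odds is quadratic, the same-domain Lipschitz constant grows with $\|\rvf_v\|$, and the cross-domain gap acquires terms of the form $(\sigma_S^{-2}-\sigma_T^{-2})\,\|\rvf_v-\cdot\|^2$ that the stated linear right-hand side cannot dominate without additional boundedness assumptions---so in that regime your completing-the-square bookkeeping would not merely be tedious, it would fail. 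State the equal-variance assumption up front and your proof goes through as planned.
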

\begin{proof}
The conditional probability of class $c_0$ given the aggregated feature $f$ can be expressed using Bayes' theorem:

\begin{equation}     
    \mathbb{P}(\rvy_s = c_0 | \rvf_s) = \frac{\mathbb{P}(\rvf_s | \rvy_s = c_0) \mathbb{P}(\rvy_s = c_0)}{\mathbb{P}(\rvf_s | \rvy_s = c_0) \mathbb{P}(\rvy_s = c_0) + \mathbb{P}(\rvf_s | \rvy_s = c_1) \mathbb{P}(\rvy_s = c_1)}.
\end{equation}

Under the assumption $\mathbb{P}(\rvy = c_0) = \mathbb{P}(\rvy = c_1)$, we simplify this to:

\begin{equation}     
    \mathbb{P}(\rvy_s = c_0 | \rvf_s) = \frac{\mathbb{P}(\rvf_s | \rvy_s = c_0)}{\mathbb{P}(\rvf_s | \rvy_s = c_0) + \mathbb{P}(\rvf_s | \rvy_s = c_1)}.
\end{equation}

Substituting in the expressions for the Gaussian distributions:

\begin{equation}
    \mathbb{P}(\rvy_s = c_0 | \rvf_s) = \frac{\operatorname{exp}\left(-\frac{(\rvf_u - \vmu_{0}^{(S)})^2}{\sigma^2}\right)}{\operatorname{exp}\left(-\frac{(\rvf_u - \vmu_{0}^{(S)})^2}{\sigma^2}\right) + \operatorname{exp}\left(-\frac{(\rvf_u - \vmu_{1}^{(S)})^2}{\sigma^2}\right)}.
\end{equation}

Thus, we have:

\begin{equation}
    \begin{aligned}
        & ||\mathbb{P}(\rvy_u = c_0 | \rvf_u) - \mathbb{P}(\rvy_v = c_0 | \rvf_v)|| \\
        &= ||\frac{\mathbb{P}(\rvf_s | \rvy_s = c_0)}{\mathbb{P}(\rvf_s | \rvy_s = c_0) + \mathbb{P}(\rvf_s | \rvy_s = c_1)} - \frac{\mathbb{P}(\rvf_v | \rvy_v = c_0)}{\mathbb{P}(\rvf_v | \rvy_v = c_0) + \mathbb{P}(\rvf_v | \rvy_v = c_1)}|| \\
        &= \frac{||\mathbb{P}(\rvf_s | \rvy_s = c_0) \mathbb{P}(\rvf_v | \rvy_v = c_1) - \mathbb{P}(\rvf_v | \rvy_v = c_0) \mathbb{P}(\rvf_s | \rvy_s = c_1)||}{\left[\mathbb{P}(\rvf_s | \rvy_s = c_0) + \mathbb{P}(\rvf_s | \rvy_s = c_1)\right] \left[\mathbb{P}(\rvf_v | \rvy_v = c_0) + \mathbb{P}(\rvf_v | \rvy_v = c_1)\right]}.
    \end{aligned}
\end{equation}

Noting that the denominator is bounded, we substitute the probabilities of the Gaussian distributions into the expression:

\begin{equation}
    \begin{aligned}
        & ||\mathbb{P}(\rvy_u = c_0 | \rvf_u) - \mathbb{P}(\rvy_v = c_0 | \rvf_v)|| \\
        &= \frac{||\operatorname{exp}\left(-\frac{(\rvf_u - \vmu^{(S)}_{0})^2}{\sigma^2}\right)\operatorname{exp}\left(-\frac{(\rvf_v - \vmu^{(T)}_{1})^2}{\sigma^2}\right) - \operatorname{exp}\left(-\frac{(\rvf_u - \vmu^{(S)}_{1})^2}{\sigma^2}\right)\operatorname{exp}\left(-\frac{(\rvf_v - \vmu^{(T)}_{0})^2}{\sigma^2}\right)||}{\operatorname{exp}(-A)}.
    \end{aligned}
\end{equation}

This leads us to:

\begin{equation}
    ||\mathbb{P}(\rvy_u = c_0 | \rvf_u) - \mathbb{P}(\rvy_v = c_0 | \rvf_v)|| \le \frac{1}{\sigma^2} ||(\vmu^{(T)}_{0} - \vmu^{(S)}_{0})(2\rvf_u - \vmu^{(S)}_{0} - \vmu^{(T)}_{0}) - (\vmu^{(T)}_{1} - \vmu^{(S)}_{1})(2\rvf_v - \vmu^{(S)}_{1} - \vmu^{(T)}_{1})||.
\end{equation}

This simplifies to:

\begin{equation}
    ||\mathbb{P}(\rvy_u = c_0 | \rvf_u) - \mathbb{P}(\rvy_v = c_0 | \rvf_v)|| \le O(||\rvf_u - \rvf_v|| + ||2\rvf_v - \vmu^{(S)}_{1} - \vmu^{(T)}_{1}||).
\end{equation}
\end{proof}

\text{(a)} We note that $\delta^{\vmu}_{(\cdot)} = ||\vmu^{(\cdot)}_{1} - \vmu^{(\cdot)}_{0}||$ and $\Delta^{\vmu}_{i} = ||\vmu^{(T)}_{i} - \vmu^{(S)}_{i}||$.

\begin{proposition}[Bound for $D^{\gamma}_{S, T}(P;\lambda)$]
    For any $\gamma \ge 0$, and under the assumption that the prior distribution $P$ over the classification function family $\mathcal{H}$ is defined, we establish a bound for the domain discrepancy measure $ D^{\gamma/2}_{S, T}(P;\lambda) $.
    Specifically, we have the following inequality:
    \begin{equation}
        \begin{aligned}
            D^{\gamma/2}_{S, T}(P;\lambda) \le & O\left( \sum_{i \in V^S} \sum_{j \in V^T} ||(A^S X^S)_i - (A^T X^T)_j||_{2}^{2} + \sum_{i \in V^S} \sum_{j \in V^T} ||X^S_i - X^T_j||_{2}^{2} \right).
        \end{aligned}
    \end{equation}
\end{proposition}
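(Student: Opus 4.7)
The plan is to reduce the expected loss discrepancy to a pointwise difference of posterior class probabilities at source and target nodes, and then apply Theorem \ref{lemma:csbm_diff_detail} to translate that difference into the feature-level quantities appearing in the bound. First, I would unfold $D^{\gamma/2}_{S,T}(P;\lambda) = \ln \E_{h \sim P} e^{\lambda (\Lgh_T(h) - \risk_S(h))}$ and use the elementary inequality $\ln \E e^{X} \le \sup X$ (or a standard sub-Gaussian bound on the cumulant generating function) to reduce the task to upper-bounding $\Lgh_T(h) - \risk_S(h)$ uniformly over $h \in \gH$. Each margin loss is an average over nodes of the probability of a misclassification event under the true conditional $\Pr(y_i \mid Z_i)$, so the difference decomposes as $\tfrac{1}{N_S N_T} \sum_{i \in V^S} \sum_{j \in V^T} \big[ \Pr(y_j \in E_h \mid Z_j) - \Pr(y_i \in E_h \mid Z_i) \big]$ for a suitable misclassification event $E_h$.

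Next, I would invoke the data model of Definition \ref{def:1}, under which the posterior $\Pr(\rvy = c_0 \mid \rvf)$ takes the Gaussian-ratio form derived in the proof of Theorem \ref{lemma:csbm_diff_detail}. Using a Lipschitz property of the classifier family in this posterior, each pairwise summand is controlled, up to constants depending on the margin $\gamma$, by $\| \Pr(\rvy_i = c_0 \mid \rvf_i) - \Pr(\rvy_j = c_0 \mid \rvf_j) \|^2$. Theorem \ref{lemma:csbm_diff_detail} then bounds this expression by $O\big( \| \rvf_i - \rvf_j \|^2 + \| 2\rvf_j - \vmu^{(S)}_1 - \vmu^{(T)}_1 \|^2 \big)$ after squaring and absorbing cross terms. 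Identifying the aggregated feature with the graph-convolution output, $\rvf_i = (A^S X^S)_i$ for $i \in V^S$ and $\rvf_j = (A^T X^T)_j$ for $j \in V^T$, and summing over $V^S \times V^T$ yields the first term of the proposition.

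For the attribute term I would convert $\| 2\rvf_j - \vmu^{(S)}_1 - \vmu^{(T)}_1 \|$ into pairwise raw-attribute form. Replacing each class mean $\vmu^{(\cdot)}_1$ by its within-class empirical average of node attributes and applying the triangle inequality splits this quantity into terms of the shape $\| \rvf_j - \bar X^S \|$ and $\| \rvf_j - \bar X^T \|$, each of which can be written as an average of pairwise raw-feature differences and, after squaring and summing, absorbed into $\sum_{i,j} \| X^S_i - X^T_j \|_2^2$. This yields the second term of the bound.

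The main obstacle I anticipate is precisely this last step: cleanly passing from the population means appearing in Theorem \ref{lemma:csbm_diff_detail} to the raw pairwise attribute discrepancies stated in the proposition, which requires either a concentration argument for the empirical class means or an extra Lipschitz-style reduction that is swept into the $O(\cdot)$. By contrast, the topological sum is comparatively routine once the GNN-as-aggregation identification $\rvf = AX$ is accepted; the attribute sum carries the bookkeeping weight of the proof.
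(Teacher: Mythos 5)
Your proposal follows essentially the same route as the paper's proof: reducing $D^{\gamma/2}_{S,T}(P;\lambda)$ to a uniform bound on $\Lgh_T(h)-\Lg_S(h)$, decomposing that difference as a pairwise double sum over $V^S\times V^T$, applying Theorem~\ref{lemma:csbm_diff_detail} to control the posterior differences $\|\eta_k(j)-\eta_k(i)\|_2^2$ by $O\big(\|\rvf_i-\rvf_j\|^2+\text{mean-deviation terms}\big)$ with the identification $\rvf_i=(A^SX^S)_i$, $\rvf_j=(A^TX^T)_j$, and finally absorbing the class-mean terms into $\sum_{i,j}\|X^S_i-X^T_j\|_2^2$ (a step the paper asserts outright, whereas you correctly flag it as the load-bearing gap and sketch the empirical-mean/concentration repair). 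The only substantive bookkeeping differences are that the paper handles the unequal margins ($\gamma/2$ on target vs.\ $\gamma$ on source) via the explicit assumption $\gL^{\gamma/2}(h_j,k)\le\gL^{\gamma}(h_i,k)$, which your single-event decomposition silently subsumes, and that your unfolding should use the source margin loss $\Lg_S$ rather than $\risk_S$ per Definition~1.
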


\begin{proof}
    For notational simplicity, let $ h_i \equiv h_i(X, G) $for any $ i \in V_S \cup V_T $. Define $ \eta_k(i) = \Pr(y_i = k \mid g_i(X, G))$ \text{ for } $k \in \{0, 1\}$, \text{ and let }
$
    \gL^{\gamma}(h_i, y_i) = \ind{h_i[y_i] \leq \gamma + \max_{k \neq y_i} h_i[k]}.
$

We can express the difference in the loss functions as follows:

\begin{align}
    \Lgh_T(h) - \Lg_S(h) &= \E_{y^T}\left[\frac{1}{N_T} \sum_{j \in V_T} \gL^{\gamma/2}(h_j, y_j)\right] - \E_{y^S}\left[\frac{1}{N_S} \sum_{i \in V_S} \gL^{\gamma}(h_i, y_i)\right] \\
    &\leq \frac{1}{\max(N_S, N_T)} \E_{y^S, y^T} \sum_{i \in V_S} \left( \frac{1}{N_T} \sum_{j \in V_T} \gL^{\gamma/2}(h_j, y_j) - \gL^{\gamma}(h_i, y_i) \right).
\end{align}

Using Definition \ref{def:1}, we derive:

    \begin{align}
        \Lgh_T(h) - \Lg_S(h) &= \frac{1}{\max(N_S, N_T)} \sum_{i \in V_S} \frac{1}{N_T} \left( \sum_{j \in V_T} \E_{y_j} \gL^{\gamma/2}(h_j, y_j) - \E_{y_i} \gL^{\gamma}(h_i, y_i) \right) \nonumber \\
        &= \frac{1}{\max(N_S, N_T)} \sum_{i \in V_S} \frac{1}{N_T} \sum_{j \in V_T} \sum_{k} \left( \eta_k(j) \gL^{\gamma/2}(h_j, k) - \Pr(y_i = k) \gL^{\gamma}(h_i, k) \right) \nonumber \\
        &= \frac{1}{\max(N_S, N_T)} \sum_{i \in V_S} \frac{1}{N_T} \sum_{j \in V_T} \sum_{k} \left( \eta_k(j) \gL^{\gamma/2}(h_j, k) - \eta_k(i) \gL^{\gamma}(h_i, k) \right) \nonumber \\
        &= \frac{1}{\max(N_S, N_T)} \sum_{i \in V_S} \frac{1}{N_T} \sum_{j \in V_T} \sum_{k} \left( \eta_k(j) \left( \gL^{\gamma/2}(h_j, k) - \gL^{\gamma}(h_i, k) \right) + \left( \eta_k(j) - \eta_k(i) \right) \gL^{\gamma}(h_i, k) \right) \label{eq:same-eta} \\
        &\leq \frac{1}{\max(N_S, N_T)} \sum_{i \in V_S} \frac{1}{N_T} \sum_{j \in V_T} \sum_{k} \left( \gL^{\gamma/2}(h_j, k) - \gL^{\gamma}(h_i, k) + \|\eta_k(j) - \eta_k(i)\|_2^2 \right). \label{eq:sample-wise-loss-diff}
    \end{align}

    The last inequality holds since both $\eta_k(j)$ and $\gL^{\gamma}(h_i, k)$ are upper-bounded by $1$, and we assume $\gL^{\gamma/2}(h_j, k) \leq \gL^{\gamma}(h_i, k)$.

    By applying Theorem \ref{lemma:csbm_diff_detail}, we obtain:
    \begin{align*}
        \sum_{k} \|\eta_k(j) - \eta_k(i)\|_2^2 &\leq O\left(\|\rvf_u - \rvf_v\|_2^2 + \|\rvf_v - \vmu^{(S)}_{1}\|_2^2 + \|\rvf_v - \vmu^{(T)}_{1}\|_2^2 + \|\rvf_v - \vmu^{(S)}_{0}\|_2^2 + \|\rvf_v - \vmu^{(T)}_{0}\|_2^2\right).
    \end{align*}

    Thus, we have:
    \begin{align*}
        \Lgh_T(h) - \Lg_S(h) &\leq \frac{1}{\max(N_S, N_T)} \sum_{i \in V_S} \frac{1}{N_T} \sum_{j \in V_T} \sum_{k} \|\eta_k(j) - \eta_k(i)\|_2^2 \\
        &\leq O\left(\sum_{i \in V_S} \sum_{j \in V_T} \|\rvf_u - \rvf_v\|_2^2 + \|\rvf_v - \vmu^{(S)}_{1}\|_2^2 + \|\rvf_v - \vmu^{(T)}_{1}\|_2^2 + \|\rvf_v - \vmu^{(S)}_{0}\|_2^2 + \|\rvf_v - \vmu^{(T)}_{0}\|_2^2\right) \\
        &\leq O\left(\sum_{i \in V_S} \sum_{j \in V_T} \|(A^S X^S)_i - (A^T X^T)_j\|_2^2 + \sum_{i \in V_S} \sum_{j \in V_T} \|X^S i - X^T_j\|_2^2\right).
    \end{align*}
\end{proof}

\section{Definition of average feature value}
\ 
\newline
We hope to quantitatively compare the differences in feature values between topology view and attribute view. Similarly, for the purpose of convenient comparison, we decided to calculate the average of their feature values. Specifically, we first obtain a topology view matrix through topology filtering, multiplying $A$ and $X$ to $\mathcal{F}$. Similarly, we perform attribute filtering by multiplying $\hat{A}$ and $X$ to $\mathcal{F}_f$ to obtain a matrix of attribute view. So our topology average value is ${Fetuare_t}=\sum_{j=1}^{d}\sum_{i=1}^{N}\left| \mathcal{F} \right|/(d*N)$ and attribute feature value is ${Feature_f}=\sum_{j=1}^{d}\sum_{i=1}^{N}\left| \mathcal{F}_f \right|/(d*N)$.
\label{Definition of average feature value}

\section{Description of Algorithm GAA}
\begin{algorithm}
    \caption{The proposed algorithm GAA}
    \label{algo: algo}
    \SetKwFunction{isOddNumber}{isOddNumber}
    \SetKwInput{Input}{Input}
    \SetKwInput{Output}{Output}

    \KwIn{Source node feature matrix $X^S$; source original graph adjacency matrix $A^S$; Target node feature matrix $X^T$; Target original graph adjacency matrix $A^T$
    source node label matrix $Y^S$; maximum number of iterations $\eta$}
    
    Compute the feature graph topological structure $\hat{A^S}$ and $\hat{A^T}$ according to $X^S$ and $X^T$ by running $k$NN algorithm.
    
    \For{$it=1$ \KwTo $\eta$}{
    
    ${Z^S}$ = $GCN$($A^S,X^S$) 
    
    ${Z^S_f}$ = $GCN$($\hat{A}^S,X^S$)\tcp{embedding of source graph}

    ${Z^T}$ = $GCN$($A^T,X^T$) 
    
    ${Z^T_f}$ = $GCN$($\hat{A}^T,X^T$)\tcp{embedding of target graph}

    $Z^S$ and $Z^S_f$ through cross-view similarity matrix refinement to get $S^S$.
    
    $Z^T$ and $Z^T_f$ through cross-view similarity matrix refinementto get $S^T$.  
    
    Attribute-Driven domain adaptive between $S^S$ and $S^T$\tcp{adaptive in two views}
    
    Domain Adaptive Learning between $Z^S$ and $Z^T$

    $\hat{y}^S_i$constrained by$y^S_i$ and $\hat{y}^T_i$constrained by$\hat{y}^T_i$

    Calculate the overall loss with Eq.(\ref{equ: all_loss})
    
    Update all parameters of the framework according to
the overall loss
    }
    
    Predict the labels of target graph nodes based on the trained framework.
    
    \KwOut{Classification result $\hat{Y}^T$}
\end{algorithm}

\section{Hyperparameter tuning detial}

\subsection{Parameter Analysis}

{ $\alpha$, $\beta$, and $\tau$ are chosen from the set $\{0.005, 0.01, 0.1, 0.5, 1, 5\}$. These values provide flexibility for adjusting the relative importance of different loss terms. $k$ (the number of neighbors for $k$-NN graph construction) is typically $k$ $\in \left\{1, \cdots, 10 \right\}$ . The optimal value for $k$ depends on the density and connectivity of the graph. Due to extremely large $k$ could also introduce noisy that will deteriorate the performance. Usually our largest $k$ will be $5$.}

{Airport Dataset: Often contains transportation networks with fewer nodes but complex edge relationships.
Given the sparsity of this dataset, $\alpha$, $\beta$ and $\tau$ should be set relatively higher to emphasize topology alignment and capture key structural relationships. $\alpha$, $\beta$ and $\tau$ is selected from $\{ 0.1, 0.5 \}$. A smaller $k$ could be more effective due to the sparser nature of these networks. We select $k$ from $\{ 3, 4 \}$.}

{Citation Dataset: This dataset often has a higher node count and diverse structural characteristics.
In such datasets, balance the impact of node attributes and topology. $\alpha$, $\beta$ and $\tau$ is selected from $\{ 0.1, 0.5 \}$. A moderate value of $k$ to capture relevant local structures could work well for this dataset. We select $k$ from $\{ 4, 5 \}$.}

{Social Network Dataset (Blog and Twitch): Social networks often contain a large number of nodes with rich attribute information but high variance in structural patterns.
Emphasize attribute alignment since social networks tend to have highly distinctive attributes. Thus, attribute shifts are more sensitive to the values of $\alpha$, $\beta$ and $\tau$, which are selected from the set $\{0.01, 0.1, 0.5\}$. Small $k$ is recommended due to the dense connections in social networks. We select $k$ from $\{ 3, 4 \}$.}

{MAG Dataset: The MAG dataset is large and diverse, containing losts of classes with various relationships and rich metadata. Structural and attribute alignment are key factors. In this context, attribute shifts are both important to the values of $\alpha$,$\beta$ and $\tau$ which are selected from the set $\{ 0.1, 0.5\}$.
The parameter $k$ works well in this context, enabling the model to capture high-level local and global structural information within the graph. We select $k$ from $\{ 4, 5 \}$.}
\begin{figure*}[!t]
\centering
\includegraphics[width=1.0\linewidth]{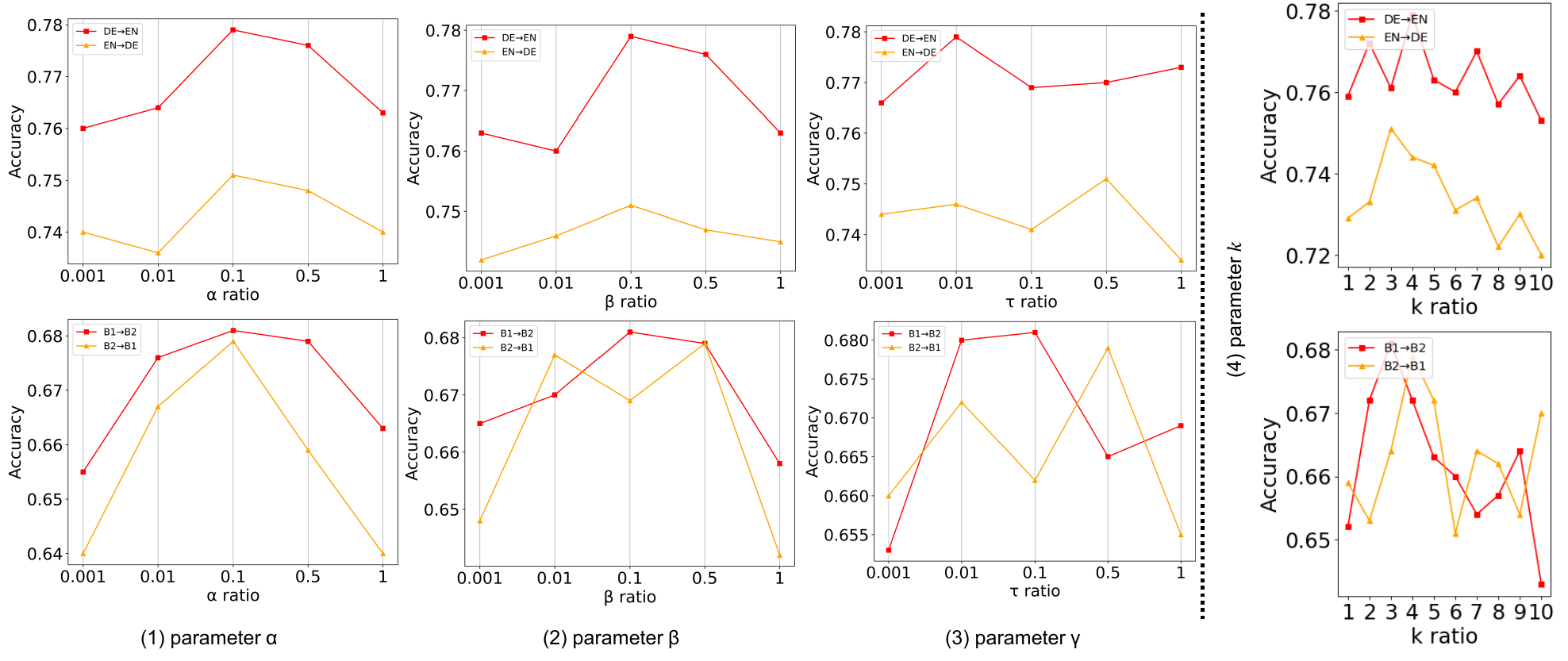}
\caption{The influence of parameters $\alpha$, $\beta$, $\tau$ and $k$ on two social datasets.}
\label{fig:alpha_beta_gamma}
\end{figure*}

\begin{table*}[!t]
\centering
\small
\scalebox{0.9}{\begin{tabular}{|c|c|c|c|c|c|}
\toprule[0.8pt]
Types                      & Datasets        &$\alpha$  & $\beta$               & $\tau$    & $k$          \\ 
\midrule[0.8pt]
\multirow{6}{*}{Airport}  & U$\rightarrow$B      & 0.5 & 0.5   & 0.01 & 4 \\
                          & U$\rightarrow$E     & 0.1 & 0.1   & 0.01 & 2 \\
                          & B$\rightarrow$U      & 0.1 & 0.1   & 0.01 & 4 \\
                          & B$\rightarrow$E      & 0.5 & 0.1   & 0.1 & 3 \\
                          & E$\rightarrow$U      & 0.5 & 0.5   & 0.1 & 4 \\
                          & E$\rightarrow$B      & 0.5 & 0.5   & 0.1 & 4 \\
\midrule[0.8pt]
\multirow{6}{*}{Citation} & A$\rightarrow$D      & 0.1 & 0.1   & 0.1 & 3 \\
                          & D$\rightarrow$A     & 0.1 & 0.1   & 0.01 & 4 \\
                          & A$\rightarrow$C      & 0.5 & 0.5   & 0.01 & 4 \\
                          & C$\rightarrow$A      & 0.1 & 0.1   & 0.1 & 3 \\
                          & C$\rightarrow$D      & 0.1 & 0.1   & 0.1 & 4 \\
                          & D$\rightarrow$C      & 0.1 & 0.1   & 0.1 & 4 \\

\midrule[0.8pt]
\multirow{2}{*}{Blog}     & B1$\rightarrow$B2      & 0.1 & 0.1   & 0.1 & 2 \\
                          & B2$\rightarrow$B1     & 0.1 & 0.1   & 0.1 & 3 \\

\midrule[0.8pt]
\multirow{2}{*}{Twitch}   & DE$\rightarrow$EN      & 0.1 & 0.1   & 0.01 & 2 \\
                          & EN$\rightarrow$DE     & 0.1 & 0.5   & 0.5 & 2 \\

\midrule[0.8pt]

\multirow{6}{*}{MAG}      & US$\rightarrow$CN      & 0.5 & 0.1   & 0.1 & 5 \\
                         & US$\rightarrow$DE      & 0.1 & 0.1   & 0.1 & 5 \\
                          & US$\rightarrow$JP      & 0.1 & 0.5   & 0.01 & 6 \\
                          & US$\rightarrow$RU      & 0.1 & 0.1   & 0.5 & 5 \\
                          & US$\rightarrow$FR      & 0.1 & 0.1   & 0.1 & 6 \\
                          & CN$\rightarrow$US      & 0.1 & 0.1   & 0.01 & 6 \\
                          & CN$\rightarrow$DE      & 0.1 & 0.1   & 0.5 & 6 \\
                         & CN$\rightarrow$JP      & 0.1 & 0.1   & 0.01 & 5 \\
                          & CN$\rightarrow$RU      & 0.5 & 0.1   & 0.1 & 5 \\
                          & CN$\rightarrow$FR      & 0.1 & 0.01   & 0.1 & 6 \\

\midrule[0.8pt]

\end{tabular}}
\caption{{Experiment hyperparameter setting Value.}}
\label{tab:datasets}
\end{table*}

\section{T-SNE sample}

\begin{figure*}[!htbp]
\centering
\subfigure[UDAGCN]{\includegraphics[width=0.18\textwidth]{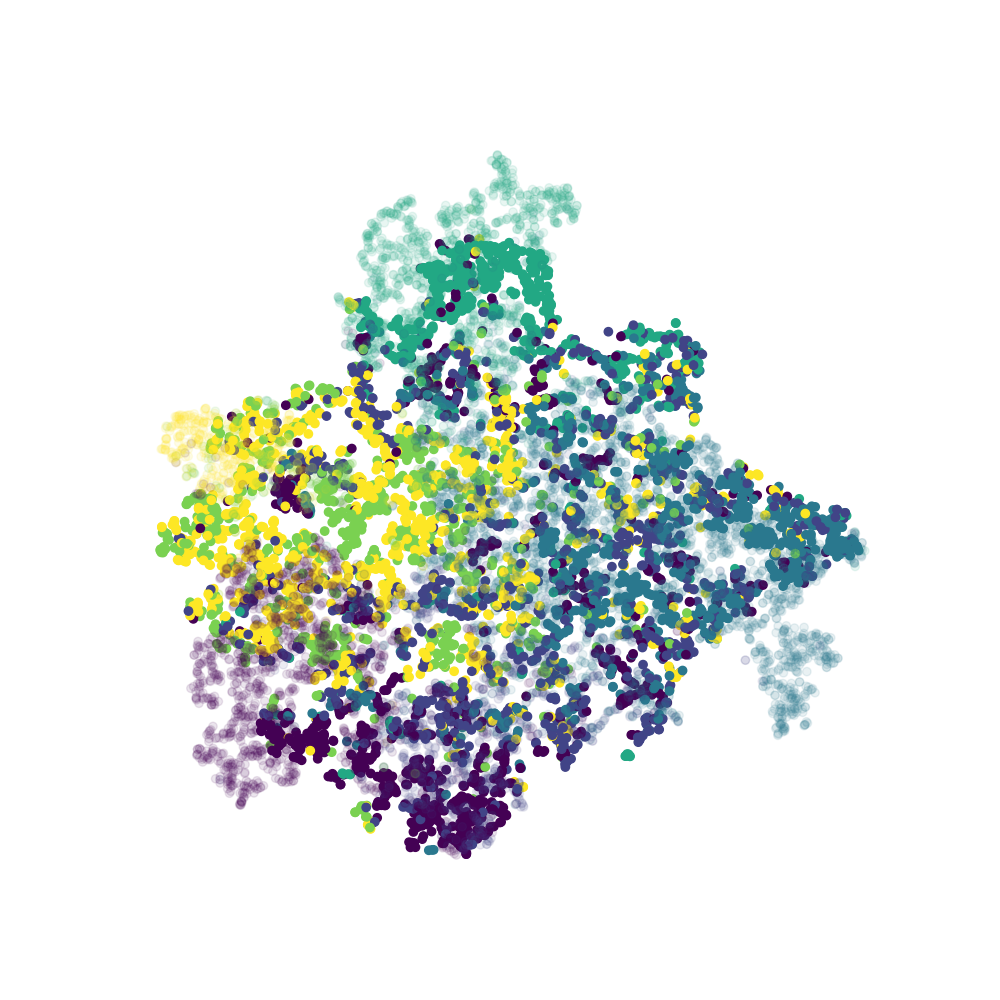}}
\hfill
\subfigure[JHGDA]{\includegraphics[width=0.18\textwidth]{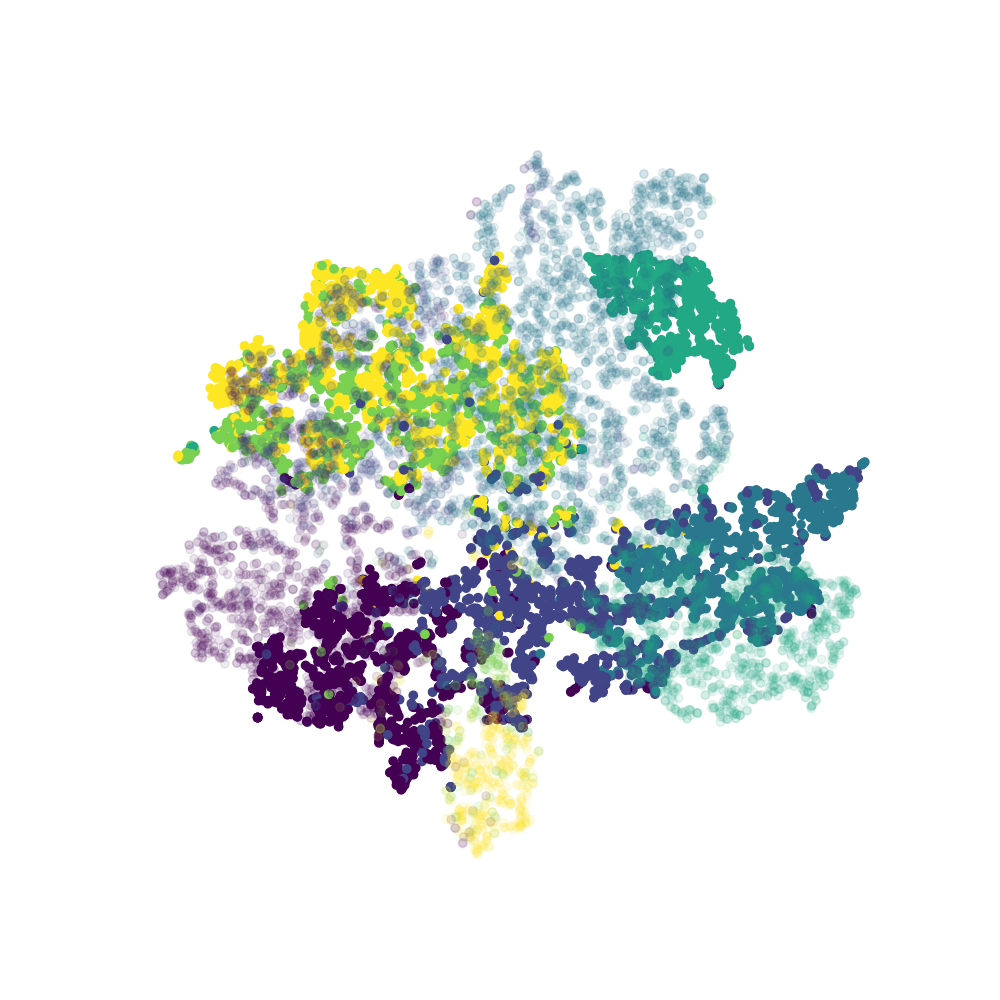}}
\hfill
\subfigure[GRADE]{\includegraphics[width=0.18\textwidth]{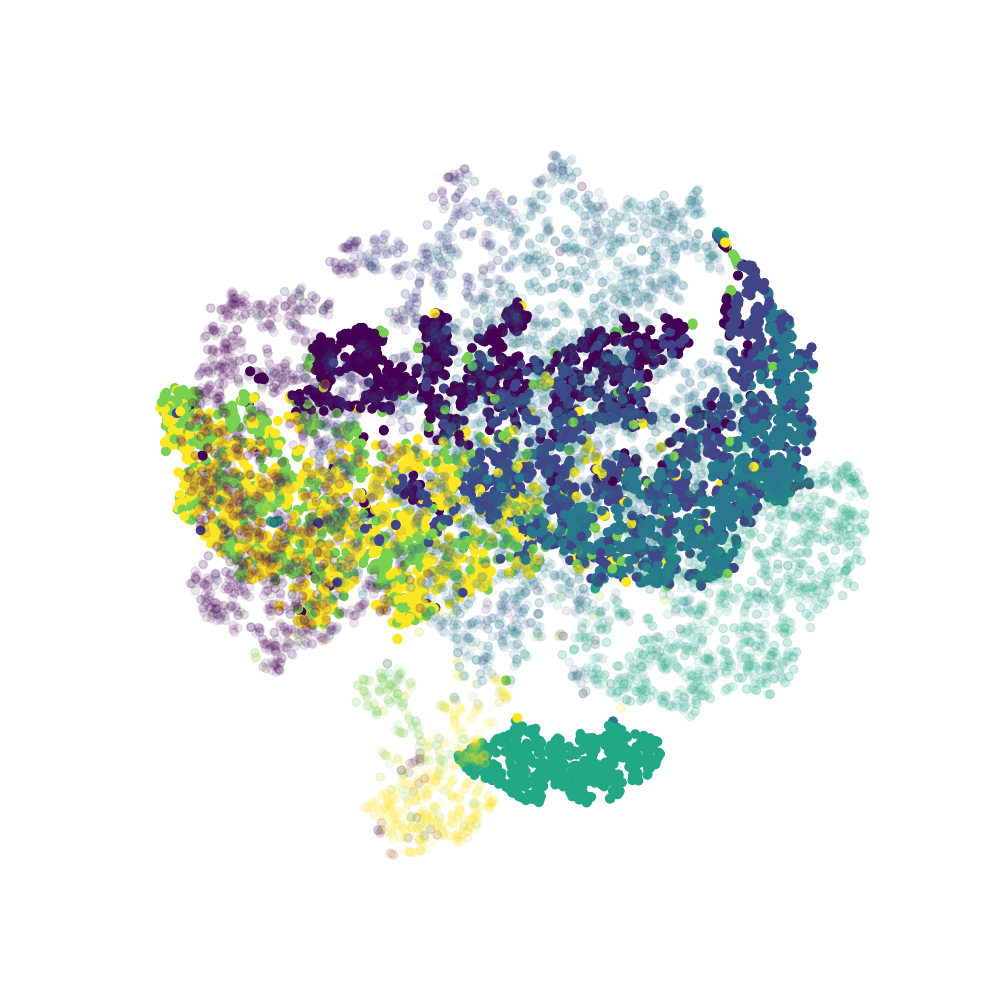}}
\hfill
\subfigure[SpecReg]{\includegraphics[width=0.18\textwidth]{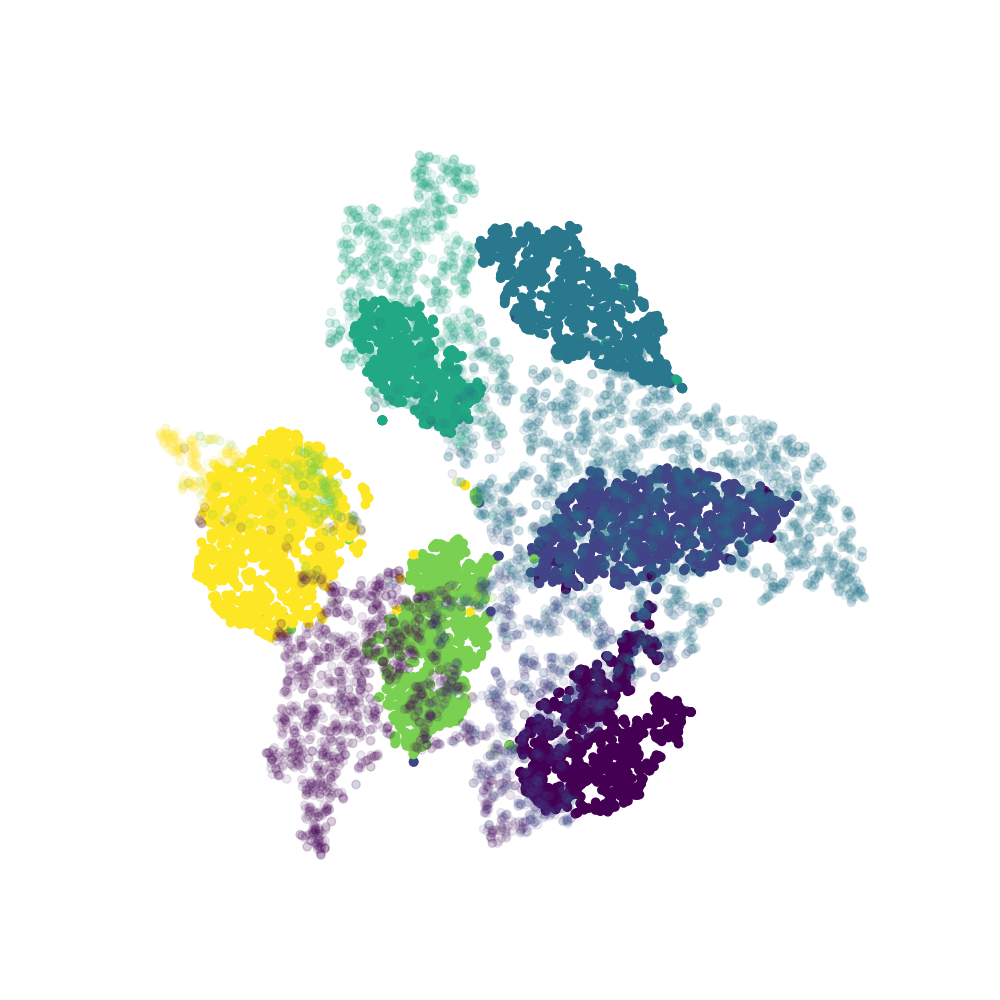}}
\hfill
\subfigure[GAA]{\includegraphics[width=0.18\textwidth]{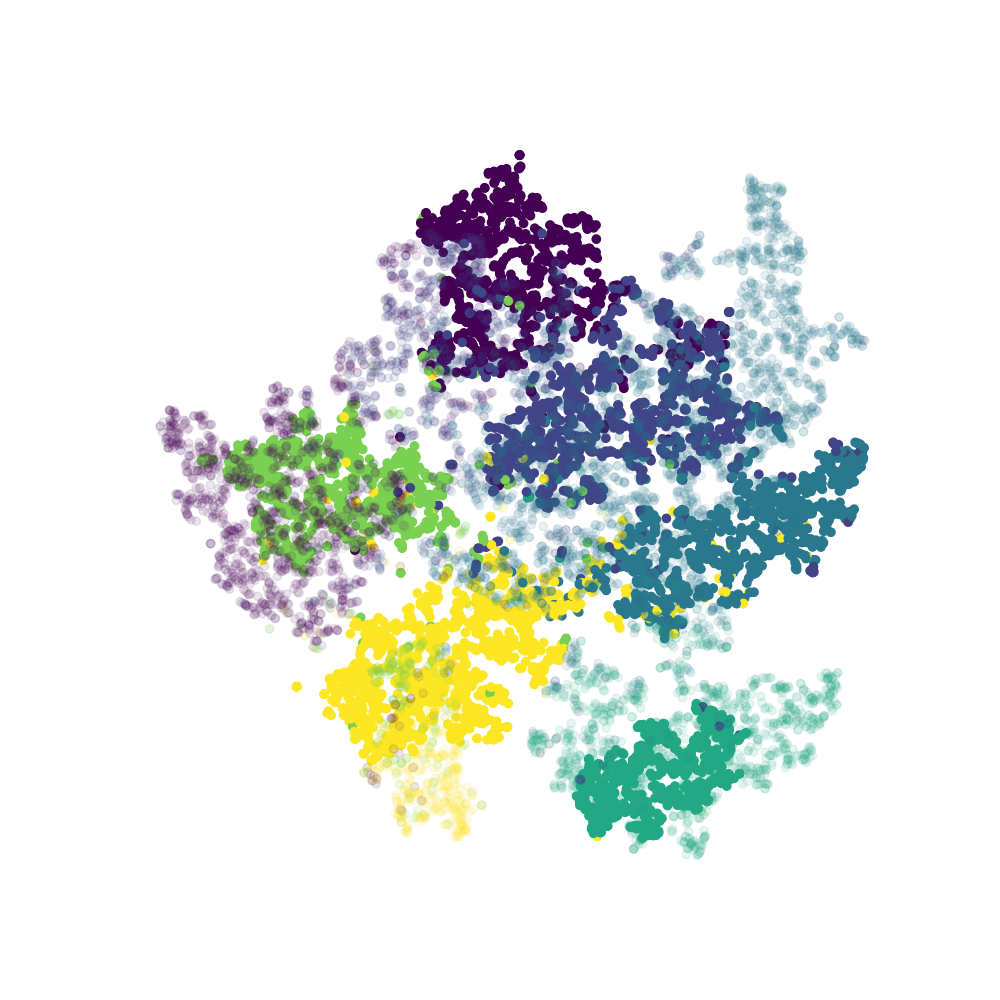}}
\caption{Visualization of learnt representations of different methods on D-A task of dataset.}
\label{fig:tsne graph with different methods}
\end{figure*}

\section{Tightness of bounds}
To evaluate the tightness of our bounds, we conduct additional experiments to verify the effects of node attribute divergence and topology divergence independently. The following experimental detail settings are designed to verify these divergences.
\subsection{attribute divergence}
To evaluate the impact of graph attribute discrepancy on GDA, we designed an experiment for this purpose. In this experiment, we provide node classification tasks across different graphs under different attribute discrepancy with same topology structure.
In this procedure, we aim to generate a collection of graph datasets, where each graph is characterized by a fixed adjacency matrix $A$, consisting of $100$ nodes with an average degree of $0.3$, and node attribute matrices $X$ randomly simulated from Gaussian-distributed samples. The specific steps are as follows: Each graph $ G_i = (A, X_i) $ shares the same fixed adjacency matrix $ A $, representing the same graph topology. $ A $ is predetermined and defines the connectivity between nodes, remaining consistent across all generated graphs. Node attributes $ X_i $ for each graph $ G_i $ are generated using 'make-blobs' function from scikit-learn. Parameters for 'make-blobs': Number of nodes: $ n_\text{samples} = 100 $, representing the total number of nodes in the graph.  
Number of clusters: $ \text{centers} = 2 $, corresponding to two distinct classes. $ n_{\text{features}} = 10 $, meaning each node is described by a 10-dimensional feature vector. $ \text{cluster\_std} $ is a variable parameter uniformly sampled from the range $ [0, 2] $, determining the dispersion of node features within each cluster. We construct a dataset of 1000 graphs, $ \{G_i = (A, X_i)\}_{i=1}^{1000} $, where: $ A $: the adjacency matrix, remains fixed across all graphs, representing the structural relationships between nodes. $ X_i $: the feature matrix, varies between graphs. The variance of the node features is determined by $ \text{cluster\_std} $, which is uniformly sampled for each graph to introduce diversity in the node attributes. GAA is trained for 100 epochs on a fixed source graph and target graphs with different attribute variances. After training, we reports three key metrics for each dataset: the bound value, $ \mathcal{L}_A $ (loss value), and the target graph accuracy. To ensure that the bound value and loss value are on the same scale, we normalize the bound value by dividing it by the number of nodes, i.e., $ 100 $. As illustrated in Figure 6(a), both the bound value and the loss value of the model increase as the attribute discrepancy grows. Conversely, the classification performance declines with increasing attribute discrepancy, highlighting that the bound attribute component is closely related to the GDA performance.

\subsection{topology divergence}
This procedure involves generating 1000 graphs by Stochastic Block Model (SBM), a probabilistic model for community-structured graphs. Each graph consists of different adjacency matrix $ A $ with uniformly distributed edge weights and a node attribute matrix $ X $ with fixed-dimensional feature vectors. The generation process is detailed below:
The graph $ G_i = (A_i, X_i) $ for each instance is generated using the SBM. SBM parameters are as follows: community contain $100$ nodes( num\_nodes = 100): $ \text{sizes} = \left[\frac{\text{num\_nodes}}{2}, \text{num\_nodes} - \frac{\text{num\_nodes}}{2}\right], $
where the graph is divided into two communities of approximately equal size, which can be seen as 2 classes.
Inter- and Intra-community connection probabilities: $\text{probs} = \begin{bmatrix} p & \frac{p}{10} \\ \frac{p}{10} & p \end{bmatrix}$, where $ p = 0.8 $ denotes the probability of edges forming within a community and $ \frac{p}{10} = 0.08 $ denotes the probability of edges forming different communities.
To incorporate variability in edge strengths, the weights of edges in the adjacency matrix $ A_i $ are drawn from a uniform distribution. Nonexistent edges are assigned a weight of $ 0 $, thereby preserving the sparsity structure dictated by the Stochastic Block Model (SBM). Each adjacency matrix $ A_i $, representing graph $ G_i $, is a symmetric $ n \times n $ matrix. Additionally, each graph $ G_i $ is associated with a node attribute matrix $ X_i $, where $ X_i $ contains $ n $ rows, corresponding to the attributes of the $ n $ nodes. Each nodes is a fixed 10-dimensional vector, ensuring consistent node attribute dimensionality across all graphs. 
All elements of $ X_i $ are set to a constant value of $ 1 $, ensuring uniform node attributes across the dataset. The dataset comprises 1000 graphs, each containing 100 nodes, represented as $ \{G_i = (A_i, X_i)\}_{i=1}^{1000} $. In this representation: $ A_i $ varies between graphs, following the Stochastic Block Model (SBM) with uniform edge weights, while $ X_i $ is a fixed matrix where each of its 10-dimensional elements is set to $ 1 $.
GAA is trained for 100 epochs on a fixed source graph and target graphs with different topology variances. After training, we reports three key metrics for each dataset: the bound value, $ \mathcal{L}_A $ (loss value), and the target graph accuracy.
Similarly, we normalize the bound value by dividing it by the number of nodes, i.e., $ 100 $. As illustrated in Figure 6(b), both the bound value and the loss value of the model increase as the attribute discrepancy grows. Conversely, the classification performance declines with increasing topology discrepancy, emphasizing that the bound's topology component is also closely linked to the GDA performance.

\begin{figure}[!htbp]
\centering
\subfigure[Attribute component]{\includegraphics[width=0.6\textwidth]{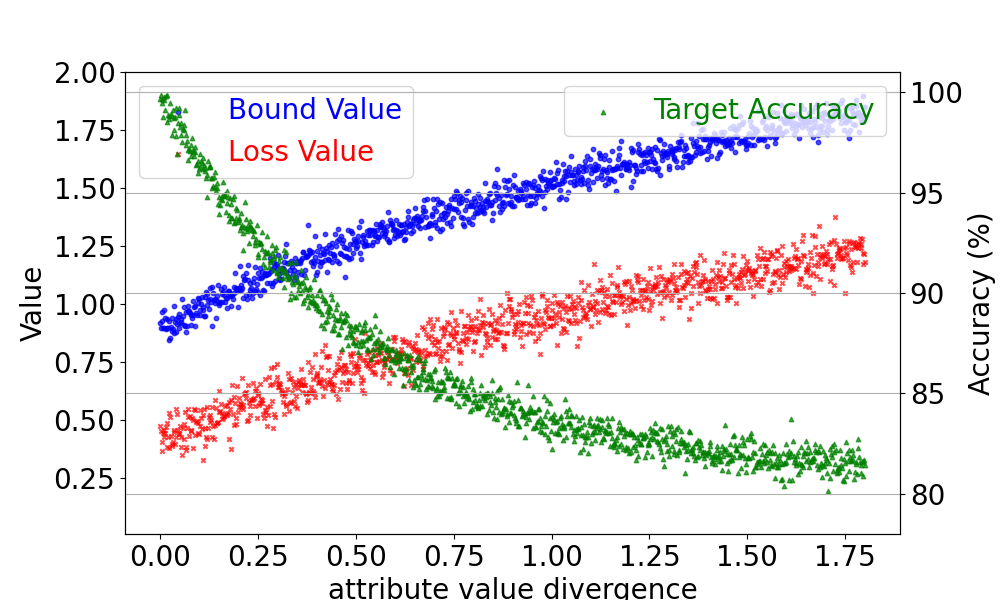}}
\hfill
\subfigure[Topology component]{\includegraphics[width=0.6\textwidth]{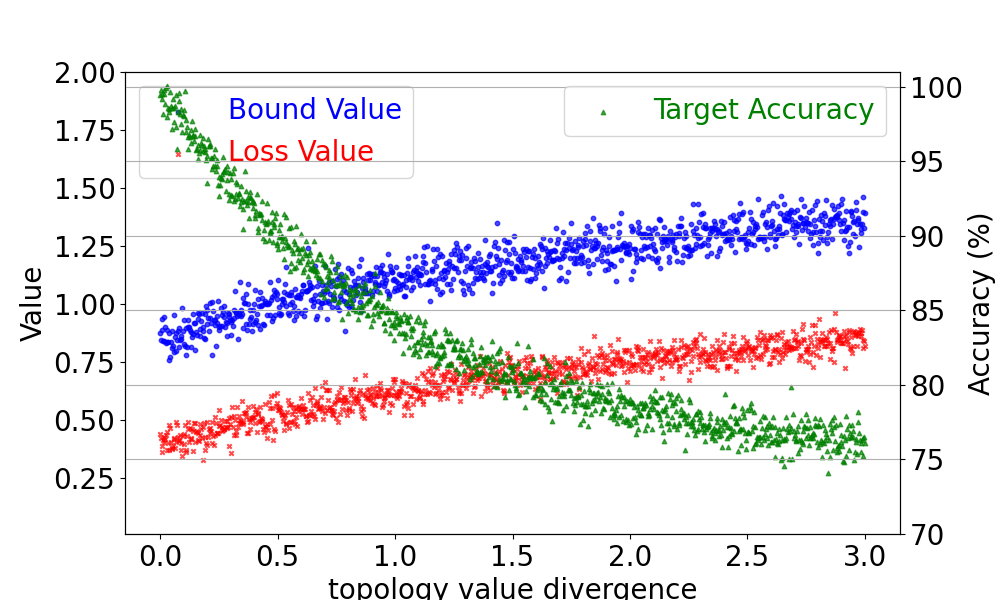}}
\caption{{Visualization of bound value and $\mathcal{L}_A$ value.}}
\label{fig:tsne graph with different methods}
\end{figure}

\section{Model Efficient Experiment}
{To further investigate the efficiency of GAAo, Table\ref{time datasets} reports the running time comparison across various algorithms. We also compared the training time and GPU memory usage of common baselines UDAGCN and a recent SOTA method, JHGDA, which aligns graph domain discrepancy hierarchical levels. As shown in Table, the evaluation results on airport dataset further demonstrate that our method achieves superior performance with tolerable computational and storage overhead.}

\begin{table*}[!t]
\centering
\small
\scalebox{0.8}{\begin{tabular}{|c|c|c|c|c|c|}
\toprule[0.8pt]
Dataset   &Method  &Training Time (Normalized w.r.t. UDAGCN)   &Memory Usage (Normalized w.r.t. UDAGCN)    & Accuracy($\%$)         \\ 
\midrule[0.8pt]
\multirow{6}{*}{U$\rightarrow$B}  & UDAGCNB      & 1 & 1   & 0.607 \\
                          & JHGDA     & 1.314 & 1.414    & 0.695 \\
                          & PA      & \bf0.498 & \underline{0.517}   & 0.679 \\
                          &$GAA_o$ & \underline{0.504} & \bf0.514    & \underline{0.697} \\
                          & $GAA$ & 1.063   & 1.113 & \bf0.704 \\
\midrule[0.8pt]

\multirow{6}{*}{U$\rightarrow$E}  & UDAGCNB      & 1 & 1   & 0.488 \\
                          & JHGDA     & 1.423 & 1.513    & 0.519 \\
                          & PA      & \underline{0.511} & \bf0.509   & \underline{0.557} \\
                          &$GAA_o$ & \bf0.507 & \underline{0.513}   & 0.556 \\
                          & $GAA$ & 1.109   & 1.098 & \bf0.563 \\
\midrule[0.8pt]
\multirow{6}{*}{B$\rightarrow$E}  & UDAGCNB      & 1 & 1   & 0.510 \\
                          & JHGDA     & 1.311 & 1.501    & 0.569 \\
                          & PA      & \bf0.502 & \bf0.497   & 0.562 \\
                          &$GAA_o$ & \underline{0.507} & \underline{0.503}   & \underline{0.566} \\
                          & $GAA$ & 1.048   & 1.107 & \bf0.573 \\
\midrule[0.8pt]

\end{tabular}}
\caption{{Comparison of Training Time, Memory Usage, and Accuracy on Airport datset.}}
\label{tab:time datasets}
\end{table*}

\end{document}